\DeclareMathAlphabet{\mathpzc}{OT1}{pzc}{m}{it}
\theoremstyle{theorem}
\newtheorem{theorem}{Theorem}[section]
\theoremstyle{lemma}
\newtheorem{lemma}[theorem]{Lemma}
\theoremstyle{definition}
\newtheorem{definition}[theorem]{Definition}
\theoremstyle{conjecture}
\tikzset{
>=stealth',
  punktchain/.style={
    rectangle, 
    rounded corners, 
    draw=black, very thick,
    text width=10em, 
    minimum height=3em, 
    text centered, 
    on chain},
  line/.style={draw, thick, <-},
  element/.style={
    tape,
    top color=white,
    bottom color=blue!50!black!60!,
    minimum width=8em,
    draw=blue!40!black!90, very thick,
    text width=10em, 
    minimum height=3.5em, 
    text centered, 
    on chain},
  every join/.style={->, thick,shorten >=1pt},
  decoration={brace},
  tuborg/.style={decorate},
  tubnode/.style={midway, right=2pt},
}
\newcommand{\IF}{\textrm{if~}}
\newcommand{\Nats}{\mathbb{N}}
\newcommand{\Reals}{\mathbb{R}}
\newcommand{\bI}{\mathbb{I}}
\newcommand{\Id}{\mathrm{Id}}
\newcommand{\bE}{\mathbb{E}}
\newcommand{\bV}{\mathbb{V}}
\newcommand{\cI}{\mathcal{I}}
\newcommand{\hcI}{\widehat{\mathcal{I}}}
\newcommand{\cJ}{\mathcal{J}}
\newcommand{\bigset}[1]{\bigl\{#1\bigr\}}
\newcommand{\bigabs}[1]{\bigl|#1\bigr|}
\newcommand{\Bigabs}[1]{\Bigl|#1\Bigr|}
\newcommand{\fcnerr}{{\epsilon_m}}
\newcommand{\neterr}{{\delta_m}}
\newcommand{\F}{R_{\fcnerr}}
\newcommand{\G}{S}
\newcommand{\aF}{\widehat{R}_{\fcnerr,\neterr}}
\newcommand{\aG}{\widehat{S}_{\neterr}}
\newcommand{\PI}{\pi}
\newcommand{\aINVNET}{\widehat{\mathcal{I}}_{\fcnerr,\neterr}}
\newcommand{\aINVNETr}{{\widehat{\mathcal{I}}_{\fcnerr,\neterr}}}
\newcommand{\h}{h_{\fcnerr}}
\newcommand{\ah}{\widehat{h}_{\fcnerr,\neterr}}
\newcommand{\g}{g_{\fcnerr}}
\newcommand{\INV}{\mathcal{I}_{\fcnerr}}
\newcommand{\INVr}{{\mathcal{I}_{\fcnerr}}}
\newcommand{\A}{A_{\fcnerr}}
\newcommand{\barA}{\bar{A}_{\fcnerr}}
\newcommand{\defn}[1]{\textbf{#1}}
\newcommand{\defas}{\coloneqq}
\newcommand{\conc}{%
  \mathord{
    \mathchoice
    {\raisebox{1ex}{\scalebox{.7}{$\frown$}}}
    {\raisebox{1ex}{\scalebox{.7}{$\frown$}}}
    {\raisebox{.7ex}{\scalebox{.5}{$\frown$}}}
    {\raisebox{.7ex}{\scalebox{.5}{$\frown$}}}
  }
}
\begin{document}

\twocolumn[
  \ourtitle{Deep Involutive Generative Models for Neural MCMC}
  \ourauthor{Span Spanbauer \And Cameron Freer \And Vikash Mansinghka}
  \ouraddress{MIT \And MIT \And MIT}

]

\begin{abstract}
We introduce deep involutive generative models, a new architecture for
deep generative modeling, and use them to define \emph{Involutive Neural MCMC},
a new approach to
fast neural MCMC. An involutive generative model represents a
probability kernel $G(\phi \mapsto \phi')$ as an involutive (i.e., self-inverting)
deterministic function $f(\phi, \pi)$ on an enlarged state space
containing auxiliary variables $\pi$. We show how to make these models volume preserving, and how to use deep
volume-preserving involutive generative models to make valid Metropolis--Hastings updates
based on an auxiliary variable scheme with an easy-to-calculate
acceptance ratio. We prove that deep involutive generative models and their volume-preserving special case are
universal approximators for probability kernels. This result
implies that with enough network capacity and training
time, they can be used to learn arbitrarily complex MCMC updates. We
define a loss function and optimization algorithm for training
parameters given simulated data. We also provide initial experiments showing that Involutive Neural MCMC can efficiently explore multi-modal distributions that are intractable for Hybrid Monte Carlo, and can converge faster than A-NICE-MC, a recently introduced neural MCMC technique.
\end{abstract}


\begin{figure*}[!t]

\newcommand{\propx}{119}
\newcommand{\propy}{120}
\newcommand{\xaxx}{47}
\newcommand{\xaxy}{-6}
\newcommand{\yaxx}{-6}
\newcommand{\yaxy}{48}
\newcommand{\titley}{118}
\newcommand{\corrx}{119}
\newcommand{\corry}{70}
\newcommand{\corryaxx}{-5}
\newcommand{\corryaxy}{28}
\newcommand{\corrxaxx}{50}
\newcommand{\corrxaxy}{2}

\centering
\begin{picture}(80,40)
\put(-25,15){\includegraphics[width=35.5mm]{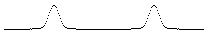}}
\put(18,7){$\phi$}
\put(-62,15){$P_{\mathrm{true}}(\phi)$}
\put(-7,33){True posterior}
\end{picture}
\begin{picture}(160,40)
    \put(25,15){$P_{\mathrm{true}}(\phi) = \frac{1}{2} \bigl[P_{N(0.5,0.05)}(\phi)+P_{N(-0.5,0.05)}(\phi)\bigr]$}
\end{picture}

\vspace{5mm}

\begin{picture}(\propx,\propy)
\put(0,0){\includegraphics[width=37mm]{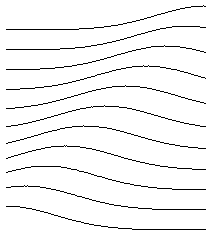}}
\put(\xaxx,\xaxy){$\phi'$}
\put(\yaxx,\yaxy){$\phi$}
\put(\yaxx,\yaxy){$\phi$}
\put(1,\titley){High-variance Gaussian}
\end{picture}
\begin{picture}(\propx,\propy)
\put(0,0){\includegraphics[width=37mm]{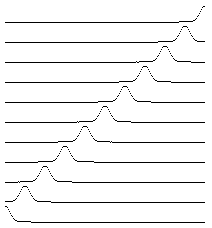}}
\put(\xaxx,\xaxy){$\phi'$}
\put(\yaxx,\yaxy){$\phi$}
\put(\yaxx,\yaxy){$\phi$}
\put(2,\titley){Low-variance Gaussian}
\end{picture}
\begin{picture}(\propx,\propy)
\put(0,0){\includegraphics[width=37mm]{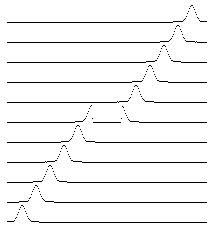}}
\put(\xaxx,\xaxy){$\phi'$}
\put(\yaxx,\yaxy){$\phi$}
\put(7,\titley){Hybrid Monte Carlo}
\end{picture}
\begin{picture}(\propx,\propy)
\put(0,0){\includegraphics[width=37mm]{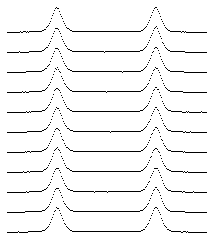}}
\put(\xaxx,\xaxy){$\phi'$}
\put(\yaxx,\yaxy){$\phi$}
\put(-5,\titley){Involutive Neural MCMC}
\end{picture}

\vspace{5mm}

\begin{picture}(\corrx,\corry)
\put(-5,0){\includegraphics[width=40mm]{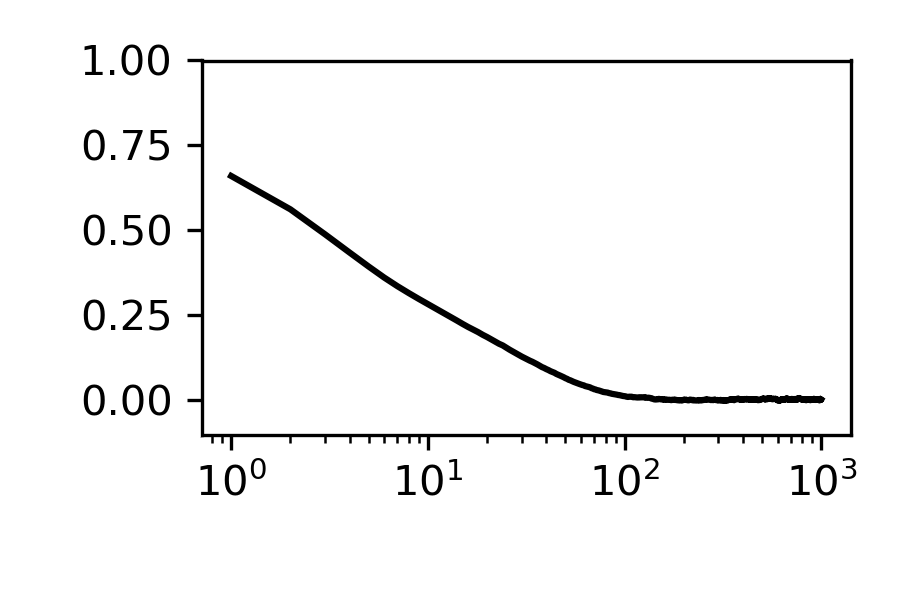}}
\put(\corrxaxx,\corrxaxy){steps}
\put(\corryaxx,\corryaxy){$\rotatebox{90}{autocorr}$}
\end{picture}
\begin{picture}(\corrx,\corry)
\put(-5,0){\includegraphics[width=40mm]{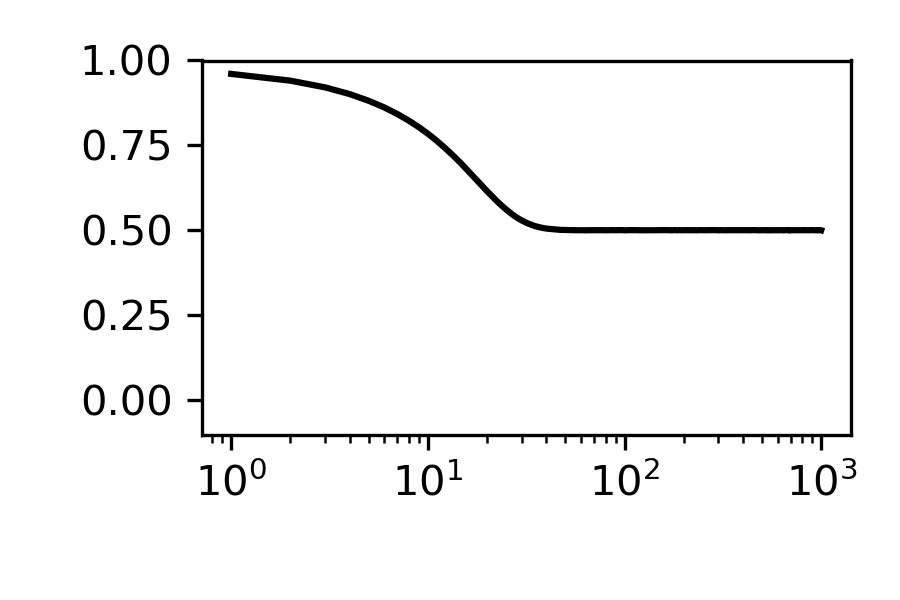}}
\put(\corrxaxx,\corrxaxy){steps}
\put(\corryaxx,\corryaxy){$\rotatebox{90}{autocorr}$}
\end{picture}
\begin{picture}(\corrx,\corry)
\put(-5,0){\includegraphics[width=40mm]{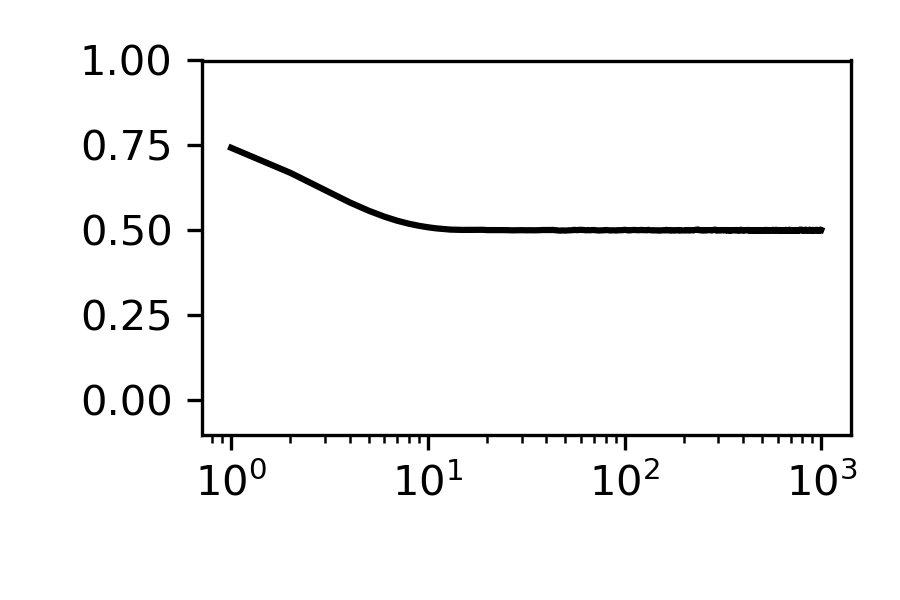}}
\put(\corrxaxx,\corrxaxy){steps}
\put(\corryaxx,\corryaxy){$\rotatebox{90}{autocorr}$}
\end{picture}
\begin{picture}(\corrx,\corry)
\put(-5,0){\includegraphics[width=40mm]{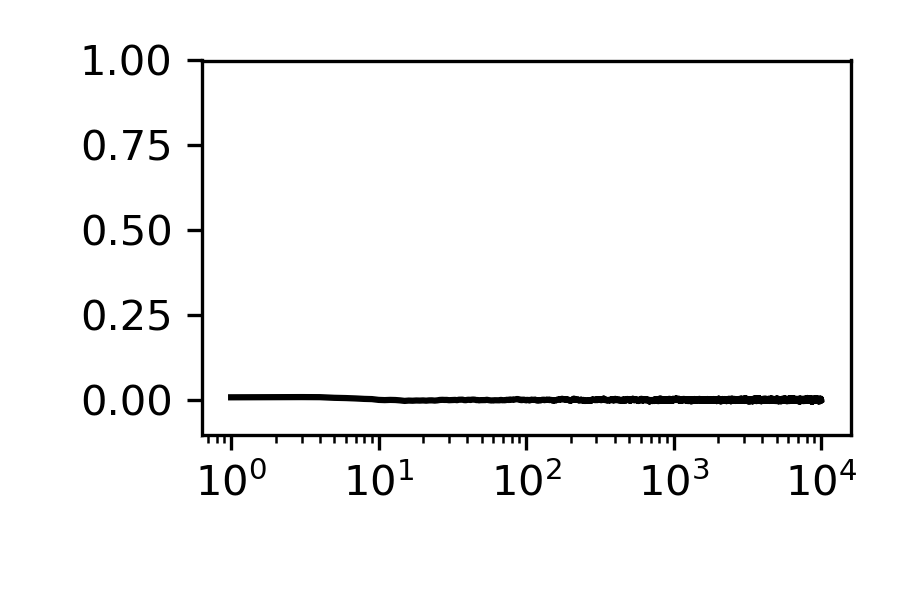}}
\put(\corrxaxx,\corrxaxy){steps}
\put(\corryaxx,\corryaxy){$\rotatebox{90}{autocorr}$}
\end{picture}

\vspace{3mm}

\caption{Consider the problem of using MCMC to sample from a mixture of two Gaussians. Here, each trace shows
the distribution (rescaled for clarity) of proposed state transitions $\phi'$ from a given initial state $\phi$.
High-variance Gaussian proposals are a poor approximation of the posterior, and hence converge slowly. Low-variance Gaussian proposals fail to mix between the two modes because proposals between the modes will be rejected with high probability. Hybrid Monte Carlo converges quickly within a mode, but also fails to mix between modes. Proposals from Involutive Neural MCMC nearly match the posterior from every state, mixing and nearly converging in a single step.}
\end{figure*}

\section{Introduction}

Markov Chain Monte Carlo (MCMC) methods are a class of very general
techniques for statistical inference \citep{brooks2011handbook}. MCMC
has seen widespread use in many domains, including cosmology \citep{dunkley2005fast}, 
localization \citep{pak2015improving}, phylogenetics \citep{ronquist2012mrbayes}, 
and computer vision \citep{kulkarni2015picture}. For the Metropolis--Hastings class of
MCMC algorithms, one must specify a proposal distribution $q(\phi \mapsto
\phi')$. Convergence will be slow if the proposal distribution is poorly
tuned to the posterior distribution $p(\phi'|D)$. Conversely, if one could
use a perfectly-tuned proposal distribution --- ideally, the exact
posterior $q(\phi \mapsto \phi') = p(\phi'|D)$ --- then MCMC would converge
in a single step.

Neural MCMC refers to an emerging class of deep learning approaches
\citep{song2017nice,wang2018meta,levy2017generalizing} that attempt to
learn good MCMC proposals from data. Neural MCMC approaches can be
guaranteed to converge, as the number of MCMC iterations increases,
to the correct distribution ---
unlike neural variational
inference \citep{DBLP:conf/icml/MnihG14, DBLP:journals/corr/KingmaW13, DBLP:conf/icml/RezendeMW14}, which can suffer from biased
approximations. Recently, \citet{song2017nice} suggested that
involutive neural proposals are desirable but difficult to achieve:
\vspace*{-5pt}
\begin{quote}
``If our proposal is deterministic, then $f_\theta(f_\theta(x, v)) = (x, v)$ should hold for all $(x, v)$, a condition which is difficult to achieve." \citep{song2017nice}
\end{quote}
\vspace*{-5pt}

\noindent {\bf Contributions.} This paper presents a solution to the
problem of learning involutive proposals posed by
\citep{song2017nice}. Specifically, it presents the following
contributions:
\vspace*{-5pt}
\begin{enumerate}
\item This paper introduces involutive neural networks, a new class of
  neural networks that is guaranteed to be involutive by construction; we also show how to constrain the Jacobian of these networks to have magnitude 1, that is, to preserve volume.

\item This paper uses involutive networks to define involutive
  generative models, a new class of auxiliary variable models, and
  shows that the volume-preserving ones can be used as Metropolis--Hastings proposals.

\item This paper proves that volume-preserving involutive generative models are
  universal approximators for transition kernels,
  justifying their use for black-box learning of good MCMC proposals.

\item This paper describes a new, lower-variance estimator for the Markov-GAN training objective \citep{song2017nice} that we use to train involutive generative models.

\item This paper shows that \emph{Involutive Neural MCMC} can improve
  on the convergence rate of A-NICE-MC, a state-of-the-art
  neural MCMC technique.

\item This paper illustrates \emph{Involutive Neural MCMC} on a simple problem.

\end{enumerate}
\vspace*{-5pt}
We motivate our approach by showing that several common Metropolis--Hastings proposals are special cases of involutive proposals (Section~\ref{sec:background}). We then show that by using a class of exactly involutive neural network architectures (Section~\ref{sec:involutive-nets}) satisfying an appropriate universality condition (Section~\ref{sec:universality}) and using adversarial training (Section~\ref{sec:training}), we can find involutive proposals that empirically converge extremely rapidly (Section~\ref{sec:experiments}).

\section{Background}
\label{sec:background}
\vspace{-5pt}
Recall that the speed of convergence of a Metropolis--Hastings algorithm is highly dependent on how well the proposal distributions match the posterior.

In order to use a given proposal distribution, one typically constructs a transition which satisfies the \emph{detailed balance} condition, which (in an ergodic setting) ensures convergence to the posterior. Satisfying this condition for a general proposal is hard, which has led researchers to use smaller classes of proposals for which this problem is tractable. Our method, \emph{Involutive Neural MCMC}, satisfies detailed balance for a universal class of proposal distributions, drawn from a generative model specified by a volume-preserving involutive function. Our method builds on previous work on invertible neural networks \citep{ardizzone2018analyzing}, for example the architecture we use in our constructive proof of universality makes use of additive coupling layers \citep{dinh2014nice} which have been cascaded \citep{dinh2016density,jacobsen2018revnet}. We now describe several existing classes of proposals, and observe that each can be viewed as an involutive proposal.

The canonical example of a class of proposal distributions is the collection of shifts by a multivariate Gaussian. These immediately satisfy detailed balance due to their symmetry, that is, the probability
$P(\phi \mapsto \phi')$
of a forward transition is equal to the probability
$P(\phi' \mapsto \phi)$
of a backward transition.
However, multivariate Gaussians are usually poor approximations of the posterior, leading to slow convergence. We observe that these proposals can be viewed as involutive proposals: choose the auxiliary variable $\pi$ to be a sample from the multivariate Gaussian, and define the state transition to be $(\phi,\pi) \mapsto (\phi+\pi,-\pi)$.

Another example class of proposal distributions is those generated by Hamiltonian dynamics in the Hybrid Monte-Carlo algorithm \citep{duane1987hybrid}. These proposals can be shown to satisfy detailed balance because they are involutive: proposals for Hybrid Monte Carlo are obtained by simulating a particle for a certain amount of time and then negating its momentum; if one performs this operation twice, the particle will end in its initial state.

Recently, researchers have begun using neural networks to parameterize classes of proposal distributions, leading to \emph{neural MCMC} algorithms. The A-NICE-MC method involves choosing a symmetric class of proposals parameterized by an invertible neural network: its Metropolis--Hastings proposal assigns 1/2 probability to the output of the network, and 1/2 to the output of the inverse of the network. This proposal is symmetric, and hence satisfies detailed balance. However, one can also view it as involutive. Specifically, let $f$ be an invertible neural network, and $\pi \sim N(0,1)$ the auxiliary variable. Define the state transition to be
$(\phi,\pi) \mapsto \begin{cases}
(f(\phi),-\pi) & \mathrm{if~} \pi>0, \\
(f^{-1}(\phi),-\pi) & \mathrm{otherwise}.
\end{cases}$

We have seen that all of these examples are special cases of involutive proposals. We now introduce
a class of exactly involutive neural network architectures (Section~\ref{sec:involutive-nets}) and
show that they satisfy a universality condition, and so may be used to
approximate any involutive proposal arbitrarily well (Section~\ref{sec:universality}).

\begin{figure}[t!]
\centering
\begin{picture}(160,65)
\put(0,0){\includegraphics[width=2.75cm]{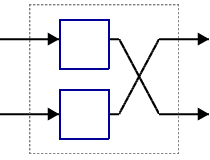}}
\put(78,0){\includegraphics[width=2.75cm]{figures/inv_fcn_block.png}}
\put(-22,43){$x_{1..n}$}
\put(160,43){$x_{1..n}$}
\put(-37,15){$x_{n+1..2n}$}
\put(160,15){$x_{n+1..2n}$}
\put(27,42){$g_\theta$}
\put(24.5,14.5){$g_\theta^{-1}$}
\put(104.5,42){$g_\theta$}
\put(102.5,14.5){$g_\theta^{-1}$}
\end{picture}
\caption{System diagram showing that two composed involutive function blocks forms the identity operation.}
\label{fig:system-diagram}
\end{figure}

\section{Involutive Neural Networks}
\label{sec:involutive-nets}

In this section, we describe how to build deep neural networks which are exactly involutive by construction. To do this, we first describe three kinds of smaller involutive building blocks, and then we describe how to compose these blocks to form a deep involutive network:

\begin{itemize}
\item \textbf{Involutive function blocks}, which are fairly general nonlinear maps, but do not fully mix information, in that every element of the output is independent of half of the elements of the input.
\item \textbf{Involutive permutation blocks}, which are linear maps and cannot be optimized, but can mix information.
\item \textbf{Involutive matrix blocks}, which are linear maps, but can be optimized and can mix information.
\end{itemize}

By composing these blocks in a particular way, we can create deep networks which have high capacity and are exactly involutive. We further show that each involutive block is either volume-preserving or can be made so, leading to high-capacity volume-preserving involutive networks appropriate for use as MCMC transition kernels. In Section~\ref{sec:universality}, we prove that these deep involutive networks are universal in a particular sense.

Let $a \conc b$ denote the concatenation of vectors $a$ and $b$, and write $a_{j..k}$ to denote the restriction of the vector $a$ to its terms indexed by $j, j+1, \ldots, k$. Let $\Id_n$ denote the $n\times n$ identity matrix.

\subsection{Involutive function blocks}

Involutive function blocks enable the application of fairly general nonlinear functions to the input data. In the typical case, they are parameterized by an invertible neural network \citep{ardizzone2018analyzing}, which is itself parameterized by two neural networks of arbitrary architecture.

\begin{definition}[Involutive function block]
Let $n \in \Nats$ and let $g \colon \Reals^n \to \Reals^n$ be a bijection.

Define the \defn{involutive function block}
$I_F^{2n,g} \colon \Reals^{2n} \to \Reals^{2n}$ by
$I_F^{2n,g}(x) \defas  g^{-1}(x_{n+1..2n}) \conc g(x_{1..n})$.
\end{definition}

\begin{figure*}[t!]
\centering
\includegraphics[width=\textwidth]{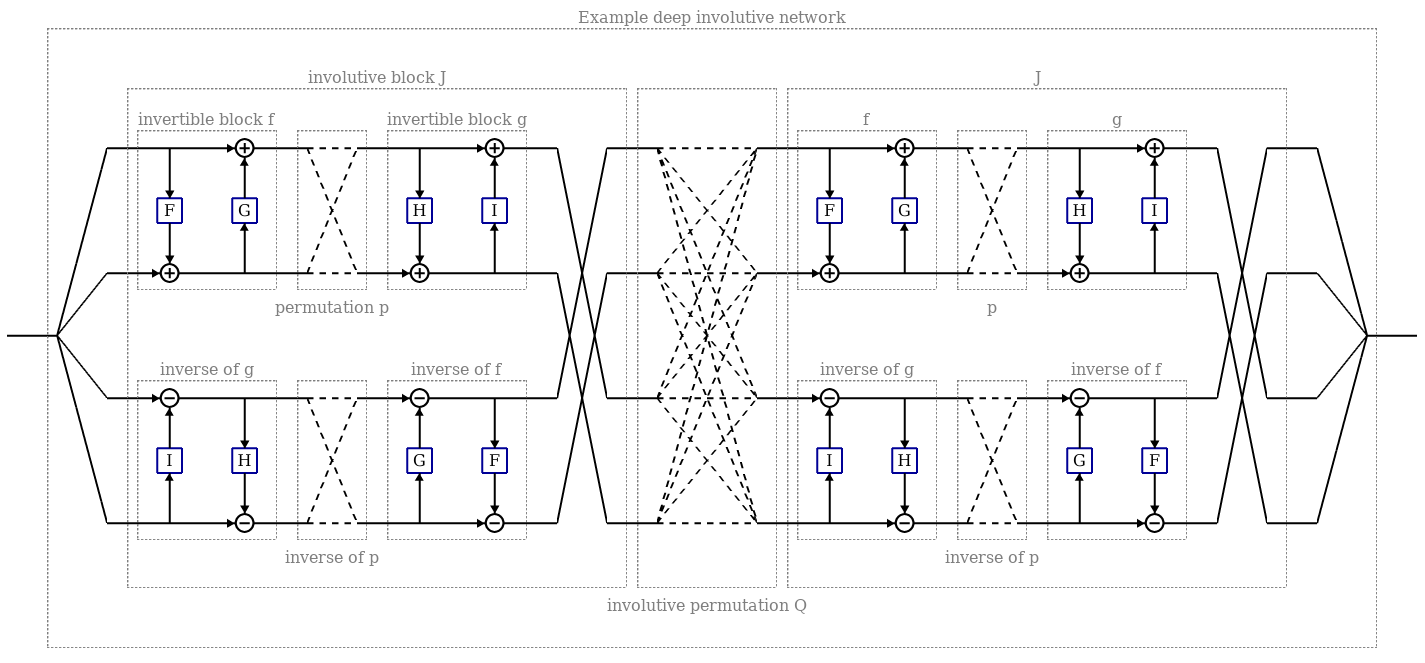}
\caption{System diagram of a typical deep involutive network. The functions $F$, $G$, $H$, and $I$ are arbitrary functions, usually induced by neural networks.}
\label{fig:system-typical}
\end{figure*}

Observe that
$(I_F^{2n,g} \circ I_F^{2n,g})(x) = (g^{-1} \circ g)(x_{1..n}) \conc$ \linebreak $(g \circ g^{-1})(x_{n+1..2n}) = x$, where $\circ$ denotes function composition,
and so the function $I_F^{2n,g}$ is indeed an involution. See Fig.~\ref{fig:system-diagram} for a system diagram depicting this fact.
Further note that $n$ and $g$ are uniquely determined by the function $I_F^{2n,g}$, and so when this function is induced by some neural net (i.e., when $g$ is itself induced by a neural net), we may think of $I_F^{2n,g}$ as a neural net with the same parameters as the neural net inducing $g$.
In this case we will sometimes elide the distinction between the function and this neural net, or refer to the function induced by the neural net by the same symbol.

If the parameter $g$ is volume-preserving, that is, the determinant of its Jacobian has magnitude 1, then we observe that the resulting involutive function block is also volume-preserving by considering the properties of the determinant of block matrices. In our experiments we obtain volume-preserving involutive function blocks by parameterizing $g$ using NICE additive coupling layers \citep{dinh2014nice} which have been cascaded \citep{dinh2016density,jacobsen2018revnet}, since these NICE layers preserve volume.

\subsection{Involutive permutation blocks}

One can see from Fig.~\ref{fig:system-diagram}, as previously noted, that each output of an involutive function block is independent of half of the inputs. In order to create more general involutive networks without this property, we mix information by applying an involutive permutation.

\begin{definition}[Involutive permutation block]
Let $n \in \Nats$ and let $\sigma$ be an involution on the set $\{1,\ldots, n\}$. Let $\bm{\sigma}$ denote the matrix defined by $\bm{\sigma}_{ij}e_j = e_{\sigma(i)}$ for $i, j\in \{1, \ldots, n\}$, where the $e_i$ are basis vectors.

Define the \defn{involutive permutation block}
$I_P^{n,\sigma} \colon \Reals^{n} \to \Reals^{n}$
by
$I_P^{n,\sigma} (x) \defas \bm{\sigma} x$.
\end{definition}

Note that $n$ and $\sigma$ are uniquely determined by the function $I^{n, \sigma}_P$. We may also think of this function as a linear layer with no parameters in a neural net.

Observe that involutive permutation blocks, as permutations, are volume preserving.

One may use any involutive permutation $\sigma$: we use a specific choice of $\sigma$ in the proof of universality, and we use uniformly random involutive permutations in the experiments. An algorithm for sampling uniformly from the space of $n$-dimensional involutive permutations is described in \citep{arndt2010generating}.

\subsection{Involutive matrix blocks}

As an alternative to involutive permutations, we may use a different class of involutive matrices to mix information. Compared to involutive permutations, involutive matrix blocks have the advantage that they can be optimized. This is because they are parameterized by two arbitrary nonzero vectors of real numbers.

\begin{definition}[Involutive matrix block]
    Let $n \in \Nats$ and let $v, w \in \Reals^n \setminus \{0^n\}$.

Define the \defn{involutive matrix block}
$I_M^{n,v,w} \colon \Reals^n \to \Reals^n$
by
$I_M^{n,v,w} \defas  \Id_n -\frac{2 v \otimes w}{v \cdot w}$.
\end{definition}

Involutive matrix blocks are in fact involutive; for completeness, we include the following proof, adapted from \cite{levine1962construction}.

\begin{lemma}
Every involutive matrix block is involutive.
\end{lemma}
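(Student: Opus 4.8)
The plan is to verify the involution property directly, by squaring the matrix $I_M^{n,v,w}$ and checking that the product equals the identity. To keep the algebra transparent I would first abbreviate the two ingredients: write $P \defas v \otimes w$ for the rank-one outer-product matrix with entries $P_{ij} = v_i w_j$, and $c \defas v \cdot w$ for the (scalar) inner product, so that the block reads $I_M^{n,v,w} = \Id_n - \frac{2}{c}\,P$. I would flag at the outset that this expression is only well-defined when $c \neq 0$; since nonzero $v$ and $w$ alone do not guarantee $v \cdot w \neq 0$ (e.g. orthogonal vectors), this nonvanishing is a standing assumption implicit in the definition, and I would state it explicitly.

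The crux of the argument is the single rank-one identity $P^2 = c\,P$, which packages the fact that the outer product is idempotent up to the scalar $c$. First I would establish it by a one-line index computation,
\[
(P^2)_{ik} = \sum_j v_i w_j\, v_j w_k = v_i w_k \sum_j w_j v_j = c\, v_i w_k = c\, P_{ik},
\]
the point being that the inner summation collapses to the scalar $v \cdot w = c$, which can then be pulled out. This is really the only nontrivial manipulation, and it is the reason the normalization by $v \cdot w$ appears in the definition in the first place.

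With this in hand, the conclusion is a short expansion in which the linear and quadratic $P$-terms cancel:
\[
\bigl(\Id_n - \tfrac{2}{c}P\bigr)^2 = \Id_n - \tfrac{4}{c}P + \tfrac{4}{c^2}P^2 = \Id_n - \tfrac{4}{c}P + \tfrac{4}{c^2}(c\,P) = \Id_n .
\]
Hence $I_M^{n,v,w}$ is its own inverse, i.e.\ involutive. I do not expect a genuine obstacle here: the proof is entirely a direct calculation, and the only points demanding care are the scalar-factoring step $P^2 = c\,P$ (where one must not confuse the matrix product $P\,P$ with the scalar $c$) and ensuring the denominator $c$ is nonzero so that the block, and this computation, make sense.
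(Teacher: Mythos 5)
Your proof is correct and follows essentially the same route as the paper's: expand the square and use the rank-one identity $(v \otimes w)(v \otimes w) = (v \cdot w)(v \otimes w)$ to cancel the cross terms. Your explicit index verification of that identity and your observation that $v \cdot w \neq 0$ must be assumed (the paper's definition only requires $v, w \neq 0^n$, which does not rule out orthogonal pairs) are both welcome additions, but the substance of the argument is identical.
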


\begin{proof}
Let $n\in \Nats$ and $v, w \in \Reals^n \setminus \{0^n\}$.
The product of $I^{n, v, w}_M$ with itself is the identity:
\begin{align*}
    \bigg(\Id_n-2\frac{v \otimes w}{v\cdot w}\bigg)\bigg(\Id_n-2\frac{v \otimes w}{v \cdot w}\bigg) \hspace*{80pt}\\
    \hspace*{5pt} = \quad \Id_n - 4 \frac{v \otimes w}{v \cdot w} + 4 \frac{(v \otimes w)(v \otimes w)}{(v \cdot w)^2}
    \quad = \quad \Id_n.
\end{align*}
Hence $I^{n, v, w}_M$ is involutive.
\end{proof}

Not all involutive matrices are involutive matrix blocks; for example, the identity matrix is involutive but not an involutive matrix block.

Compared to involutive permutation blocks, involutive matrix blocks potentially allow for freer mixing between dimensions, and can also be optimized.

Note that $n$, $v$, and $w$ are uniquely determined by the function $I^{n, v, w}_M$ if we constrain $|w|=1$, and so without loss of generality we may think of it as a neural net with parameters $v$ and $w$.

Observe that involutive matrix blocks are volume preserving by the matrix determinant lemma.

\subsection{Deep involutive networks}

In order to create deep networks which have high capacity and which are exactly involutive, we want to compose several involutive blocks. For the resulting network to be involutive, we must compose them in particular ways.

\begin{definition}[Involutive network]
Let $n\in\Nats$.
    We say that a neural network is an \defn{invertible network of dimension $n$} if it
induces a bijection from $\Reals^n$ to $\Reals^n$ and its inverse is also expressible as a neural network.
    Write $\bV_n$ to denote the set of invertible neural networks of dimension $n$.

    A neural network is an \defn{involutive network of dimension $n$} if the function it induces is contained in the closure of the following operations. Write
$\bI_n$ for the set of all such functions.
    \begin{itemize}
        \item $I_F^{n,g} \in \bI_{n}$ for $g \in \bV_{n/2}$ when $n$ is even;

        \item $I_P^{n,\sigma} \in \bI_n$ for every involution $\sigma$ on $\{1, \ldots, n\}$;

        \item $I_M^{n,v,w} \in \bI_n$ for every $v, w \in \Reals^n \setminus \{0^n\}$;

        \item $\cI \circ \cJ \circ \cI \in \bI_n$ for every $\cI, \cJ \in \bI_n$;

        \item $g^{-1} \circ \cJ \circ g \in \bI_n$ for every $\cJ \in \bI_n$ and $g \in \bV_n$.
    \end{itemize}
\end{definition}

It is immediate by induction that every involutive network is involutive. Furthermore, if every block in the network is volume preserving, then the network is volume-preserving as well.

Our proof of universality considers the special case of the involutive network $g^{-1} \circ h^{-1} \circ I_P^{n,\sigma} \circ h \circ g$ for particular $n$, $g$, $h$, and $\sigma$. However, the field of deep learning has found that despite the fact that traditional neural networks with a single hidden layer are universal \citep{hornik1991approximation}, most functions of interest are learned more effectively by networks with more than one hidden layer. Therefore we recommend constructing deep involutive networks similarly, as a composition of many functions.

For a system diagram of the architecture for a typical involutive neural network,
see Fig.~\ref{fig:system-typical}.

\section{Universality of Involutive Generative Models}
\label{sec:universality}

When using a machine learning model, it is useful to know which class of functions the model can represent. In this section, we consider generative models built from deep involutive networks, and show that they are universal approximators in a certain sense. Specifically, we prove that these networks, which map a state and an auxiliary variable $(\phi,\pi) \in (\Reals^n,\Reals^m)$ to an output interpreted as another state and auxiliary variable $(\phi',\pi')$, can serve as arbitrarily good generative models of any continuous function of a Gaussian on any compact subset of $\Reals^n$. 

The theorem statement and proof are given in Section~\ref{sec:universality-proof}.

Our proof is constructive: for any desired transition $T$, we explicitly construct an involutive generative model such that transitions drawn from it are likely to be drawn, with arbitrarily high probability, from a distribution as close as desired to that of $T$. Moreover, both of these approximation parameters are explicit in the description of the generative model.
We first describe a family of involutive functions that approximate the desired cumulative distribution function, and then make use of the universal approximation theorem \cite{hornik1991approximation}, which has a constructive proof, to approximate these involutive functions by involutive neural nets.

As a consequence, involutive generative models simply match the expressive power of traditional neural generative models. There is, however, one key advantage: a standard generative model maps a state and auxiliary variable to a state: $(\phi,\pi) \mapsto \phi'$, whereas an involutive generative model produces an additional piece of information, an output auxiliary variable $\pi'$ such that the model maps $(\phi',\pi') \mapsto (\phi,\pi)$.

In other words, it produces a value for the auxiliary variable such that the model makes a backwards transition $\phi' \mapsto \phi$. This immediately gives a lower bound on the backward transition probability (via the sampling distribution for $\pi$), and is a key property we use to easily generate Metropolis--Hastings transitions satisfying detailed balance as shown in Section~\ref{sec:validity-proof}.


\subsection{Proof of Universality}
\label{sec:universality-proof}

\begin{theorem}[Involutive generative models are universal]
\label{maintheorem}
Let $n\in \Nats$, and let random variables
$\pi \sim N(0^{n+6},\Id_{n+6})$
and $\pi' \sim N(0,1)$.

For all compact sets
$\Omega \subseteq \Reals^n$,
and all continuous functions
$T \colon \Reals^n \times \Reals \to \Omega$
    there exists a sequence $\{\widehat{\cI}_m\}_{m\in \Nats}$ of involutive neural networks
that induce continuous functions $\Reals^{2n+6} \to \Reals^{2n+6}$
such that for all
$\phi \in \Omega$
the random variables
${\hcI_{m}}[\phi] \defas {\hcI_m(\phi \conc \pi})_{1..n}$ converge in distribution to
$T[\phi] \defas T(\phi \conc \pi')$, as $m\to\infty$.
\end{theorem}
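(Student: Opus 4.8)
The plan is to reduce the statement to a single routing problem and then discharge it with the universal approximation theorem. Fix $\phi \in \Omega$. Since $\pi' \sim N(0,1)$ and the first coordinate $\pi_1$ of $\pi \sim N(0^{n+6},\Id_{n+6})$ is itself a standard scalar Gaussian, the target random variable $T[\phi] = T(\phi \conc \pi')$ has \emph{exactly} the law of $T(\phi \conc \pi_1)$. It therefore suffices to build involutive networks $\hcI_m$ whose first $n$ output coordinates, on input $\phi \conc \pi$, are close in distribution to $T(\phi \conc \pi_1)$; all error gets pushed into the index $m$. The remaining auxiliary coordinates of $\pi$ serve as scratch space (a copy of the proposed state), as a volume-compensating ancilla, and as carriers of the reversal information forced by involutivity -- this is what the extra six dimensions are for.

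First I would realize the map ``emit $T(\phi \conc \pi_1)$ into the state slot'' as a legal element of $\bI_{2n+6}$. The map $(\phi,z) \mapsto T(\phi \conc z)$ is in general not injective, so it cannot itself be a conjugating bijection; the standard device is an additive coupling layer $g$ that leaves $(\phi,z)$ fixed and adds $T(\phi \conc z)$ to a fresh $n$-dimensional scratch block $r$ of the auxiliary variable. Such a $g$ lies in $\bV_{2n+6}$ (its inverse subtracts the same quantity), and composing it with a coordinate scaling that shrinks $r$ by a factor $c_m$ while expanding an ancilla to keep the map volume preserving yields another element of $\bV_{2n+6}$. Conjugating by this bijection the involutive permutation that swaps the state block with the scratch block -- that is, forming exactly the shape $g^{-1}\circ h^{-1}\circ I_P^{n,\sigma}\circ h\circ g$ admitted by the closure defining $\bI$ -- produces an \emph{exact} involution whose first $n$ output coordinates equal $T(\phi \conc \pi_1) + r/c_m$ with $r \sim N(0^n,\Id_n)$.

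The only nonlinear ingredient above is the continuous map $T$ appearing inside the coupling. Because $T$ is continuous and its values lie in the compact set $\Omega$, the universal approximation theorem supplies neural nets $\widehat T_m \to T$ uniformly on $\Omega \times [-M_m,M_m]$ for any $M_m \to \infty$; substituting $\widehat T_m$ for $T$ keeps every block inside $\bV$ and hence the whole network inside $\bI_{2n+6}$, giving genuine involutive neural networks $\hcI_m$ that induce continuous functions $\Reals^{2n+6}\to\Reals^{2n+6}$. I would then let $c_m,M_m \to \infty$ together: the additive term $r/c_m \to 0$ in probability, a truncation argument handles the region $\{|\pi_1|>M_m\}$ (whose probability vanishes), so $\widehat T_m(\phi \conc \pi_1) \to T(\phi \conc \pi_1)$ in probability, and a Slutsky/Portmanteau argument gives that the first $n$ output coordinates of $\hcI_m(\phi \conc \pi)$ converge in distribution to $T(\phi \conc \pi_1) \stackrel{d}{=} T[\phi]$ for every $\phi \in \Omega$.

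The crux is the construction of the exact involution. An involution is \emph{forced} to leak reversal information into its output -- this is precisely the backward-auxiliary-variable property emphasized just before the theorem -- so its first $n$ coordinates cannot be a clean deterministic function of $(\phi,\pi_1)$ alone. The whole design therefore rests on choosing the scratch/ancilla gadget so that this unavoidable contamination appears as an additive Gaussian of variance $1/c_m^2$ that can be driven to zero, while the map remains an exact, continuous, volume-preserving involution at every finite $m$. Verifying that the resulting \emph{marginal law} actually converges, rather than merely that the maps converge pointwise, is where the real care is needed.
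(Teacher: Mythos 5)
Your proposal is correct and follows essentially the same route as the paper's proof: both construct an exact volume-preserving involution of the conjugated-permutation form $g^{-1}\circ h^{-1}\circ I_P^{2n+6,\sigma}\circ h\circ g$ whose first $n$ output coordinates equal $T(\phi\conc\pi_1)$ plus an additive Gaussian perturbation of vanishing scale, approximate the continuous nonlinearity via Hornik's universal approximation theorem on a growing compact set, and absorb the unbounded Gaussian tails into an event of vanishing probability before concluding convergence in distribution. The only substantive difference is the involution gadget itself: you swap the state block with a scratch block carrying the proposal, so the backward half of the conjugation deposits the reversal data into the scratch slot, whereas the paper keeps the state block in place and uses auxiliary ``switch'' coordinates (the differences $x_{n+3}-x_{n+2}$ and $x_{2n+6}-x_{2n+5}$, transposed by $\sigma$) so that the backward half acts as the identity on the relevant branch --- both yield exact involutions, and yours is arguably the more direct construction.
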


We begin with an outline of the proof technique.
For each $m\in\Nats$, we aim to define a involutive neural network $\hcI_{m}$.
We begin by exhibiting an involutive function parametrized by a positive real $\fcnerr<1$ depending on $m$, such that when the auxiliary variable is treated as a random variable, the involutive function matches the cumulative distribution function of the desired state transition arbitrarily well as $\fcnerr \to 0$. Then we show that this involutive function can be uniformly approximated arbitrarily well by an involutive neural network parameterized by some other positive real $\neterr<1$ depending on both $\fcnerr$ and $m$, as $\neterr \to 0$.
(In the main proof of Theorem~\ref{maintheorem} below, we will impose tighter constraints on $\fcnerr$ and $\neterr$.)

We will use Hornik's Universal Approximation Theorem to obtain such a uniform approximation.

\begin{theorem}[{\citet[Thm.~3]{hornik1991approximation}}]
    \label{thm:Hornik-UAT}
    Let $a, b$ be positive integers,
$F \colon \Reals^a \to \Reals^b$ be a continuous function, and $\psi \colon \Reals \to \Reals$ be a continuous bounded nonconstant activation function.
    For any compact set $\Omega \subseteq \Reals^a$ and $\delta > 0$ there is a neural network consisting of a single hidden layer with activation $\psi$ that induces a continuous function
    $\widehat{F}_\delta \colon \Reals^a \to \Reals^b$ satisfying
    \[\max_{x \in \Omega} \bigl|F(x)-\widehat{F}_\delta(x)\bigr|_1 \leq \delta.\]
\end{theorem}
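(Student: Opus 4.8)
The plan is to prove that the class of single-hidden-layer networks is dense in $C(\Omega)$ under the uniform norm, from which the statement follows immediately: the claimed $\widehat{F}_\delta$ is simply any element of the class within $\delta$ of $F$. I would organize the argument as a chain of reductions, ending in one genuinely substantive one-dimensional lemma. First, I reduce from $b$ outputs to a single output. Writing $F=(F_1,\ldots,F_b)$, it suffices to approximate each continuous scalar component $F_i\colon\Reals^a\to\Reals$ to within $\delta/b$ uniformly on $\Omega$ by a scalar network $x\mapsto\sum_j v_{ij}\,\psi(w_{ij}\cdot x+c_{ij})$. The $b$ resulting networks merge into a single hidden layer whose units are the disjoint union of the units of the pieces, with the output weights for coordinate $i$ supported on the $i$-th block; summing the component errors bounds the $\ell_1$ output error by $\delta$. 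So everything reduces to showing that $\Sigma(\psi)\defas\operatorname{span}\{x\mapsto\psi(w\cdot x+c):w\in\Reals^a,\ c\in\Reals\}$ is dense in $C(\Omega)$.

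Next I reduce arbitrary targets to cosines. The real span of the trigonometric ridge functions $\{x\mapsto\cos(w\cdot x+c)\}$ is a subalgebra of $C(\Omega)$ by the product-to-sum identities, contains the constants (take $w=0$), and separates points (for $x\neq y$ choose $w$ with $w\cdot x\neq w\cdot y$); by the Stone--Weierstrass theorem it is therefore dense. Hence it suffices to approximate each $\cos(w\cdot x+c)$ uniformly on $\Omega$ by elements of $\Sigma(\psi)$. Since $\Omega$ is compact, the scalar argument $t=w\cdot x+c$ ranges over a bounded interval $[-R,R]$, and any uniform approximation of the scalar function $\cos$ on $[-R,R]$ by $\sum_j v_j\,\psi(\lambda_j t+\theta_j)$ pulls back, under $t=w\cdot x+c$, to an element of $\Sigma(\psi)$. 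Thus the whole theorem reduces to a single one-dimensional density statement.

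The crux is that one-dimensional lemma: for bounded nonconstant continuous $\psi$, the span of dilated translates $\{t\mapsto\psi(\lambda t+\theta)\}$ is dense in $C[-R,R]$. This is the step I expect to be the main obstacle, and it is exactly where boundedness and nonconstancy are used. My plan is to mollify $\psi$ against a Gaussian kernel $\rho$: the convolution $\psi_\rho$ is real-analytic and bounded, and, by approximating its defining integral by Riemann sums, lies in the uniform-on-compacta closure of the span of translates of $\psi$; the Gaussian's variance can be chosen so that $\psi_\rho$ remains nonconstant. A bounded polynomial is constant, so the bounded nonconstant analytic $\psi_\rho$ is not a polynomial; consequently, for each order $k$ the derivative $\psi_\rho^{(k)}$ is not identically zero and, by analyticity, vanishes only on a discrete set, so some fixed $\theta_0$ has $\psi_\rho^{(k)}(\theta_0)\neq 0$ for all $k$. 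Differencing in $\lambda$ at $\lambda=0$ then yields, as uniform-on-compacta limits of linear combinations inside the span, each monomial $t\mapsto t^k\,\psi_\rho^{(k)}(\theta_0)$, hence every polynomial; the Weierstrass approximation theorem finishes the lemma.

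Unwinding the two reductions then proves density of $\Sigma(\psi)$ and hence the theorem. Equivalently, the crux can be phrased dually: were $\Sigma(\psi)$ not dense, Hahn--Banach together with the Riesz representation theorem would furnish a nonzero finite signed measure $\mu$ on $\Omega$ annihilating every $\psi(w\cdot x+c)$, and the one-dimensional lemma (i.e. that such a bounded nonconstant $\psi$ is \emph{discriminatory}) would force $\mu=0$, a contradiction. Either way, I expect the discriminatory property — the mollification-and-analyticity argument of the previous paragraph — to be the only nonroutine part; the coordinatewise merge and the Stone--Weierstrass reduction to cosines are straightforward.
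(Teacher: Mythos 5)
The paper offers no proof of this statement at all: it is imported verbatim, with a citation to \citet[Thm.~3]{hornik1991approximation}, and is then used as a black box in the universality proof of Section~\ref{sec:universality-proof}. So the only meaningful comparison is with Hornik's original argument, and your route is genuinely different from it: Hornik derives the uniform-on-compacta case by functional-analytic means (annihilating measures, Fourier-type arguments, and reduction from his $L^p$ density results), whereas you splice together the Hornik--Stinchcombe--White cosine reduction (Stone--Weierstrass applied to the ridge-cosine algebra) with the mollification-and-monomials argument later systematized by Leshno, Lin, Pinkus, and Schocken: Gaussian-mollify $\psi$ to a bounded, nonconstant, real-analytic $\psi_\rho$ lying in the closed span of dilated translates of $\psi$; note a bounded analytic nonconstant function is not a polynomial; pick $\theta_0$ outside the countable union of the discrete zero sets of all $\psi_\rho^{(k)}$; and recover each monomial $t^k$ via divided differences in the dilation parameter. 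Your proof is correct, and it makes visible exactly where each hypothesis acts (continuity for the Riemann-sum approximation of the convolution, boundedness for the Gaussian tail truncation and for ``bounded polynomial is constant,'' nonconstancy to keep $\psi_\rho$ nonconstant at small variance), which is a real advantage given that the paper elsewhere leans on this theorem having a constructive proof; your primal chain is closer to that spirit than the Hahn--Banach dual you mention as an aside. Two small patches, neither a gap: point separation by cosines needs a rescaling since $\cos$ is not injective --- for $x \neq y$ take $w = s(x-y)$ with $0 < s\,|x-y|^2 < 2\pi$ and $c = -w\cdot y$, so the algebra takes values $\cos\bigl(w\cdot(x-y)\bigr) < 1$ at $x$ and $1$ at $y$; and the uniform-on-compacta convergence of the $\lambda$-divided differences to $t^k\,\psi_\rho^{(k)}(\theta_0)$ should explicitly invoke local bounds on $\psi_\rho^{(k+1)}$, which analyticity supplies. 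Your coordinatewise merge with error $\delta/b$ per output into one hidden layer correctly yields the stated $\ell_1$ bound, matching the theorem's single-hidden-layer form.
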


We now define and prove several facts about some objects that will be useful in the proof.

Define $\F \colon \Reals^{n+3} \to \Reals^{n+3}$ by
    \[
        \F(x) \defas
\begin{cases}
      0^{n+3} & \IF q \leq -\frac{1}{2}\\
      (q+\frac{1}{2})  (T(\phi,\pi)-\phi) \conc 0^3 & \IF -\frac{1}{2} < q < \frac{1}{2} \\
      (T(\phi,\pi)-\phi) \conc 0^3 & \IF \frac{1}{2} \leq q,
\end{cases}
\]
where $\phi \defas x_{1..n}$,  $\pi \defas \frac{x_{n+1}}{\fcnerr}$, and $q \defas x_{n+3} - x_{n+2}$.

Define $\G \defas \Reals^{n+3} \to \Reals^{n+3}$ by
\[
\G(x) \defas
\begin{cases}
    0^{n+3} & \IF q \leq -\frac{1}{2} \\
      (q+\frac{1}{2}) \phi' \conc 0^3 & \IF -\frac{1}{2} < q < \frac{1}{2} \\
      \phi' \conc 0^3& \IF \frac{1}{2} \leq q,
\end{cases}
\]
where $\phi' \defas x_{1..n}$ and $q \defas x_{n+3} - x_{n+2}$.

Define $\g \colon \Reals^{2n+6} \to \Reals^{2n+6}$ by
    \[
        \g(x)
        \defas
        x \odot (1^n \conc \fcnerr 1^{n+6}) +(0^{n+2} \conc 1 \conc 0^{n+2} \conc 1),
    \]
where $\odot$ denotes pointwise multiplication. Note that its inverse satisfies
    \[
        \textstyle
        \g^{-1}(x)=(x-(0^{n+2} \conc 1 \conc 0^{n+2} \conc 1)) \odot (1^n \conc \frac{1}{\fcnerr} 1^{n+6}).
    \]

Define $\h \colon \Reals^{2n+6} \to \Reals^{2n+6}$ by
    \[
        \h \defas \, \bigl(x_{1..n+3} +\G(v)\bigr) \conc v
\]
where $v \defas x_{n+4..2n+6} + \F(x_{1..n+3})$.

Note that its inverse satisfies
\[
    \h^{-1}(x) = w
    \conc \bigl(x_{n+4..2n+6} - \F(w)\bigr)
\]
where $w \defas x_{1..n+3} - \G(x_{n+4..2n+6})$.

Let $\sigma$ be the permutation of $\{1, \ldots, 2n+6\}$ that transposes $n+2$ with $n+3$ and transposes $2n+5$ with $2n+6$ (and leaves all other elements fixed).

Now consider the involutive function $\INV \colon \Reals^{2n+6} \to \Reals^{2n+6}$ defined by $\INV \defas \g^{-1} \circ \h^{-1} \circ I_P^{2n+6,\sigma} \circ \h \circ \g$.

Let $\A$ denote the event that $\PI_3 - \PI_2 > -\frac{1}{2 \fcnerr}$ and $\PI_{n+6} - \PI_{n+5} > -\frac{1}{2 \fcnerr}$ and $|\PI|<\frac{1}{\fcnerr}$ all hold.

Note that
\[
    \label{eq:neterr-assumption}
    \tag{$\dagger$}
    \lim_{\fcnerr \to 0} P(\A) = 1.
\]

\begin{lemma}
    \label{lem:conditioned}
    Conditioned on the event $\A$, we have
    \[
        \INV(\phi \conc \PI)_{1..n} = T(\phi,\PI_1)+\fcnerr \PI_{4..n+3}.
    \]
\end{lemma}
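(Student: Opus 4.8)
The plan is to prove the identity by direct computation, pushing the input $\phi \conc \PI$ through the five-fold composition $\INV \defas \g^{-1} \circ \h^{-1} \circ I_P^{2n+6,\sigma} \circ \h \circ \g$ one layer at a time and tracking each coordinate. The essential point is that conditioning on $\A$ forces every evaluation of the piecewise maps $\F$ and $\G$ into a single known branch, so that each coupling layer collapses to an affine map and the remaining work is bookkeeping.

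First I would apply $\g$ to $x \defas \phi \conc \PI$. Since $\g$ fixes the first $n$ coordinates and multiplies the remaining $n+6$ by $\fcnerr$ before adding $1$ in positions $n+3$ and $2n+6$, the vector $y \defas \g(x)$ satisfies $y_{1..n} = \phi$, $y_{n+1} = \fcnerr\PI_1$, $y_{n+3} = \fcnerr\PI_3 + 1$, and analogously for the other coordinates. Consequently the argument $q = y_{n+3} - y_{n+2}$ fed to $\F$ in the forward pass of $\h$ equals $1 + \fcnerr(\PI_3 - \PI_2)$, and the first clause of $\A$ forces $q > \tfrac12$; likewise the $q$ fed to $\G$ equals $1 + \fcnerr(\PI_{n+6} - \PI_{n+5}) > \tfrac12$ by the second clause of $\A$. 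Both thus lie in the third (``full'') branch. Tracking the additive coupling in $\h$, I would then check that in the first $n$ output coordinates the term $-\phi$ supplied by $\F$ cancels the $\phi$ carried from $y$, while the scaling $y_{n+1} = \fcnerr\PI_1$ makes $\pi = y_{n+1}/\fcnerr = \PI_1$ the argument of $T$; this already places the target value $T(\phi,\PI_1) + \fcnerr\PI_{4..n+3}$ in coordinates $1..n$ of $z \defas \h(y)$.

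Next I would apply the permutation $I_P^{2n+6,\sigma}$, which swaps coordinates $n+2 \leftrightarrow n+3$ and $2n+5 \leftrightarrow 2n+6$. The key observation, and the reason $\sigma$ is chosen exactly this way, is that this swap flips the sign of each $q$-difference later read by the backward layer: after permutation the argument fed to $\G$ in $\h^{-1}$ becomes $-1 + \fcnerr(\PI_{n+5} - \PI_{n+6}) < -\tfrac12$ and the argument fed to $\F$ becomes $-1 + \fcnerr(\PI_2 - \PI_3) < -\tfrac12$, both again by the two difference clauses of $\A$. Hence both land in the first (``zero'') branch, so $\G$ and $\F$ each return $0$ and $\h^{-1}$ acts as the identity on the permuted vector. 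Applying $\g^{-1}$ then rescales by $1/\fcnerr$ only in the auxiliary block and fixes the first $n$ coordinates, which therefore still read $T(\phi,\PI_1) + \fcnerr\PI_{4..n+3}$, giving the claim. (Coordinates $1..n$ are untouched by $\sigma$, by $\h^{-1}$, and by the scaling of $\g^{-1}$, which is what lets the target value survive from $\h$ to the end.)

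I expect the only real obstacle to be index bookkeeping: routing the $n+6$ auxiliary coordinates correctly through the two concatenations in $\h$ and through the permutation, and confirming which clauses of $\A$ are actually invoked (only the two difference conditions enter; the magnitude bound $|\PI| < \tfrac{1}{\fcnerr}$ is not needed for this lemma and controls branch membership elsewhere). Conceptually nothing is deep, since $\g$, $\F$, $\G$, and $\sigma$ have been arranged precisely so that the forward coupling implements $\phi \mapsto T(\phi,\PI_1)$ up to the $\fcnerr\PI_{4..n+3}$ perturbation while the permutation disables the backward coupling, so the proof is a careful but elementary verification.
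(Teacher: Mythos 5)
Your proof is correct and follows the same route as the paper's (much terser) argument: the paper's proof simply observes that the event $\A$ fully determines the branches of $\F$ and $\G$ taken in evaluating $\INV(\phi \conc \PI)$ and that the expression then simplifies, while you carry out that simplification explicitly layer by layer. Your details check out --- in particular the forward pass lands both $q$-differences in the third branch, the permutation $\sigma$ flips their signs so that the backward pass lands in the zero branch and $\h^{-1}$ acts as the identity, and you are right that only the two difference clauses of $\A$ (not the magnitude bound $|\PI|<\tfrac{1}{\fcnerr}$) are needed for this lemma.
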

\begin{proof}
The event $\A$ fully determines the branches of $\F$ and $\G$ that are taken in the evaluation of $\INV(\phi \conc \PI)$, which enables us to simplify the expression $\INV(\phi \conc \PI)_{1..n}$ to the stated form.
\end{proof}

For $\phi \in \Omega$, define the random variable $\INVr[\phi] \defas \INV(\phi \conc \PI)_{1..n}$.
The following lemma is immediate.
\begin{lemma}
    \label{lem:rv-convergence-in-dist}
Conditioned on the event $\A$, the random variable $\INV[\phi]$
    converges in distribution to $T[\phi]$ as $\fcnerr \to 0$.
    \qed
\end{lemma}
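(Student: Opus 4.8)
The plan is to combine the explicit formula of Lemma~\ref{lem:conditioned} with the limit~\eqref{eq:neterr-assumption}, transferring a straightforward \emph{unconditional} convergence to the conditional statement. By Lemma~\ref{lem:conditioned}, conditioned on $\A$ we have $\INV[\phi] = T(\phi,\PI_1) + \fcnerr\,\PI_{4..n+3}$, so it suffices to understand the limiting law of the right-hand side.

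First I would analyze the right-hand side \emph{without} conditioning. Since $\PI \sim N(0^{n+6},\Id_{n+6})$, its first coordinate satisfies $\PI_1 \sim N(0,1)$, which is exactly the law of $\pi'$; hence $T(\phi,\PI_1)$ has the same distribution as $T[\phi] = T(\phi\conc\pi')$ for \emph{every} value of $\fcnerr$, by equality in distribution. The perturbation $\fcnerr\,\PI_{4..n+3}$ is $\fcnerr$ times a fixed $n$-dimensional standard Gaussian vector, so it tends to $0$ in probability as $\fcnerr \to 0$. Slutsky's theorem then gives that the unconditional law of $T(\phi,\PI_1)+\fcnerr\,\PI_{4..n+3}$ converges in distribution to $T[\phi]$.

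Second I would upgrade this to the conditional statement. Writing $X_\fcnerr \defas T(\phi,\PI_1)+\fcnerr\,\PI_{4..n+3}$ for the unconditioned variable and taking any bounded continuous test function $f$, I would expand $\bE[f(\INV[\phi]) \mid \A] = \bE[f(X_\fcnerr)\,\mathbf{1}_\A]/P(\A)$ and write $\bE[f(X_\fcnerr)\,\mathbf{1}_\A] = \bE[f(X_\fcnerr)] - \bE[f(X_\fcnerr)\,\mathbf{1}_{\A^c}]$. The subtracted term is bounded in absolute value by $(\sup|f|)\,P(\A^c)$, which vanishes by~\eqref{eq:neterr-assumption}, while the leading term converges to $\bE[f(T[\phi])]$ by the previous paragraph. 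Dividing by $P(\A)\to 1$ yields $\bE[f(\INV[\phi]) \mid \A] \to \bE[f(T[\phi])]$, which is exactly the claimed convergence in distribution.

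The step requiring the most care is this last transfer, precisely because the conditioning event $\A$ itself depends on $\fcnerr$, so one cannot simply invoke a fixed conditional law. The argument works only because $P(\A)\to 1$ by~\eqref{eq:neterr-assumption}, so that the $\fcnerr$-dependent conditioning is asymptotically negligible and does not distort the limit; no finer control of the conditional law of $\PI$ given $\A$ is needed. This is why the lemma is \emph{immediate} once Lemma~\ref{lem:conditioned} and~\eqref{eq:neterr-assumption} are in hand.
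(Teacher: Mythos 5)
Your proposal is correct and follows the route the paper intends: the paper declares this lemma ``immediate'' from Lemma~\ref{lem:conditioned} and gives no written proof, and your argument (explicit formula, $\PI_1 \overset{d}{=} \pi'$, Slutsky for the vanishing perturbation $\fcnerr\,\PI_{4..n+3}$) is exactly the fleshed-out version. Your careful handling of the fact that the conditioning event $\A$ itself depends on $\fcnerr$, via $P(\A)\to 1$ from~\eqref{eq:neterr-assumption}, addresses a subtlety the paper silently glosses over, and is a genuine improvement in rigor.
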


We will show that conditioned on the event $\A$, and for appropriately small $\fcnerr$ and $\neterr$, we can approximate $\INV$ arbitrarily well with an involutive neural network $\aINVNET$.

Since $\Omega$ is a compact subset of $\Reals^n$, it is bounded, and hence contained in a ball of finite radius $r \in \Reals$. Let $\Omega^+ \subseteq \Reals^{n+3}$ be the closure of the ball of radius $7r+10$, and let $\Omega^{++} \subseteq \Reals^{n+3}$ be the closure of the ball of radius $14r+21$. The sets $\Omega^+$ and $\Omega^{++}$ are closed and bounded, hence compact subsets of $\Reals^{n+3}$.

Since $\F$ and $\G$ are continuous, by Theorem~\ref{thm:Hornik-UAT},
for any $\neterr > 0$ there are neural networks each with a single hidden layer and sigmoid activation that induce continuous functions
$\aF \colon \Reals^{n+3} \to \Reals^{n+3}$ and $\aG \colon \Reals^{n+3} \to \Reals^{n+3}$,
 satisfying
\[\max_{x \in \Omega^{++} } \bigl|\F(x)-\aF(x)\bigr|_1 \leq \neterr\]
and
\[\max_{x \in \Omega^{++} } \bigl|\G(x)-\aG(x)\bigr|_1 \leq \neterr.\]
That is, $\aF$ and $\aG$ converge uniformly to $\F$ and $\G$, respectively, on $\Omega^{++}$ as $\neterr \to 0$.


Define $\ah \colon \Reals^{2n+6} \to \Reals^{2n+6}$ by
    \[
        \ah \defas \, \bigl(x_{1..n+3} +\aG(v')\bigr) \conc v'
\]
where $v' \defas x_{n+4..2n+6} + \aF(x_{1..n+3})$.

Note that its inverse satisfies
\[
    \ah^{-1}(x) = w'
    \conc \bigl(x_{n+4..2n+6} - \aF(w')\bigr)
\]
where $w' \defas x_{1..n+3} - \aG(x_{n+4..2n+6})$.

Now form the involutive neural network that induces a function $\aINVNET \colon \Reals^{2n+6} \to \Reals^{2n+6}$ defined as follows,
\[
    \aINVNET \defas \g^{-1} \circ \ah^{-1} \circ I_P^{2n+6,\sigma} \circ \ah \circ \g,
\]
by composing the layers or neural nets corresponding to each term in the function definition.

\begin{lemma}
    \label{lem:hhat-convergence}
    As $\neterr \to 0$,
the function
    $\ah$ converges uniformly to $\h$ on $\Omega \times B_3(0^{n+6})$ and
its inverse $\ah^{-1}$ converges to $\h^{-1}$ on $\Omega^+ \times \Omega^{+}$.
\end{lemma}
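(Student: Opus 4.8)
The plan is to reduce both claims to the two uniform-approximation bounds $\max_{\Omega^{++}}\bigl|\F-\aF\bigr|_1\le\neterr$ and $\max_{\Omega^{++}}\bigl|\G-\aG\bigr|_1\le\neterr$ stated just above the lemma, combined with the (uniform) continuity of $\F$ and $\G$, via the triangle inequality. There is no deep idea here; the only real content is the radius bookkeeping. I must check that every point at which $\F,\aF,\G,\aG$ is evaluated inside the definitions of $\h,\ah,\h^{-1},\ah^{-1}$ actually lands in $\Omega^{++}$, so that the approximation bounds apply, and that the intermediate arguments stay in a fixed compact set on which I may invoke continuity. This is precisely what the radii $7r+10$ and $14r+21$ of $\Omega^{+}$ and $\Omega^{++}$ are engineered to guarantee.

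For the forward statement, I would fix $x\in\Omega\times B_3(0^{n+6})$, so $|x_{1..n}|\le r$ and $|x_{n+4..2n+6}|\le 3$, and set $v\defas x_{n+4..2n+6}+\F(x_{1..n+3})$ and $v'\defas x_{n+4..2n+6}+\aF(x_{1..n+3})$. From the definition of $\F$ its nonzero entries are a convex multiple of $T(\phi,\PI)-\phi$ with $\phi,T(\phi,\PI)\in\Omega$, so $|\F(x_{1..n+3})|\le 2r$; hence $x_{1..n+3}\in\Omega^{++}$ and $v,v'\in\Omega^{+}\subseteq\Omega^{++}$. The approximation bound then gives $|v'-v|_1=|\aF-\F|_1\le\neterr$. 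Comparing the two blocks of $\ah(x)$ and $\h(x)$: the second blocks differ by $v'-v$ (bounded by $\neterr$); the first blocks differ by $\aG(v')-\G(v)$, which I split as $|\aG(v')-\G(v')|_1+|\G(v')-\G(v)|_1$, bounded by $\neterr$ (approximation, since $v'\in\Omega^{++}$) plus $L\,|v'-v|_1\le L\neterr$ (Lipschitz continuity of $\G$ on the compact set $\Omega^{+}$). Summing gives $|\ah(x)-\h(x)|\le(2+L)\neterr\to 0$, uniformly in $x$.

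The inverse statement is symmetric. Fixing $x\in\Omega^{+}\times\Omega^{+}$, set $w\defas x_{1..n+3}-\G(x_{n+4..2n+6})$ and $w'\defas x_{1..n+3}-\aG(x_{n+4..2n+6})$. Since $x_{n+4..2n+6}\in\Omega^{+}\subseteq\Omega^{++}$, the approximation bound gives $|w'-w|_1\le\neterr$; moreover $|\G(x_{n+4..2n+6})|\le 7r+10$, so $|w|,|w'|\le 14r+20$, placing both $w,w'\in\Omega^{++}$ --- this is exactly the step that forces the outer radius $14r+21$. The first blocks of $\ah^{-1}$ and $\h^{-1}$ then differ by $w'-w$ (bounded by $\neterr$), while the second blocks differ by $\F(w)-\aF(w')$, split as $|\F(w)-\F(w')|_1+|\F(w')-\aF(w')|_1$; the second summand is $\le\neterr$ by approximation, and the first tends to $0$ uniformly by uniform continuity of $\F$ on the compact set $\Omega^{++}$ applied to $|w-w'|\le\neterr$. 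Summing yields uniform convergence $\ah^{-1}\to\h^{-1}$ on $\Omega^{+}\times\Omega^{+}$.

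The main obstacle is this nested-ball accounting rather than any analytic difficulty, and I would organize the write-up around it. One asymmetry is worth flagging in the plan: $\G$ is bilinear, hence locally Lipschitz, so I may use a genuine Lipschitz constant for the $|\G(v')-\G(v)|$ term, but $\F$ is built from the merely-continuous $T$, so for $|\F(w)-\F(w')|$ I must fall back on uniform continuity of $\F$ on the compact $\Omega^{++}$ rather than a Lipschitz bound. I should also confirm at the outset that $\F$ and $\G$ are continuous at the branch boundaries $q=\pm\tfrac12$ (the piecewise definitions agree there), since this both justifies applying Theorem~\ref{thm:Hornik-UAT} to them upstream and licenses the continuity estimates used here.
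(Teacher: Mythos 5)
Your proposal is correct and follows essentially the same route as the paper's proof: the same radius bookkeeping ($|\F|\le 2r$, $|\G(x)|\le|x|$, and the balls $\Omega^{+}$, $\Omega^{++}$) to guarantee that every evaluation of $\aF,\aG$ lands where the Hornik bounds apply, followed by the triangle inequality. Your write-up is in fact slightly more careful than the paper's at the final step --- the paper asserts uniformity ``since it is formed from sums and compositions of uniformly converging functions,'' whereas you correctly note that one also needs uniform continuity of $\F$ (and a Lipschitz bound for $\G$) on the relevant compact sets to push the uniform estimate through the composition.
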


\begin{proof}
First observe that $|\G(x)| \leq |x|$ and $|\F(x)| \leq 2r$, so that $|\h(x)| \leq 3|x|+4r$.

For $x \in \Omega \times B_3(0^{n+6})$, we have $|x_{n+4..2n+6}+\aF(x_{1..n+3})| < |\aF(x_{1..n+3})| + 3 < |\F(x_{1..n+3})| + 4 < 2r+4$, and hence we have $x_{n+4..2n+6}+\aF(x_{1..n+3}) \in \Omega^{++}$. Thus for $x \in \Omega \times B_3(0^{n+6})$, all applications of $\aF$ and $\aG$ in $\ah(x)$ are on points in $\Omega^{++}$ (where $\aF$ and $\aG$ converge uniformly to $\F$ and $\G$), and so $\ah$ converges to $\h$ on $\Omega \times B_3(0^{n+6})$ as $\neterr \to 0$. Furthermore, the convergence is uniform, since it is formed from sums and compositions of uniformly converging functions.

For $x \in \Omega^+ \times \Omega^{+}$ we have $|x_{1..n+3}-\aG(x_{n+4..2+6})| < |\aG(x_{n+4..2+6})| + 7r+10 < |\G(x_{n+4..2+6})| + 7r+11 < 14r+21$, and hence we have $x_{n+4..2n+6}-\aG(x_{1..n+3}) \in \Omega^{++}$. Thus for $x \in \Omega^+ \times \Omega^{+}$, all applications of $\aF$ and $\aG$ in $\ah^{-1}(x)$ are on points in $\Omega^{++}$ (where $\aF$ and $\aG$ converge to $\F$ and $\G$) and so $\ah^{-1}$ converges to $\h^{-1}$ on $\Omega^+ \times \Omega^{+}$ as $\neterr \to 0$.
\end{proof}

    Note that by Lemma~\ref{lem:hhat-convergence}, we have
\[
    \label{eq:first-fcnerr-assumption}
    \tag{$\ddagger$}
    \max_{x \in \Omega \times B_{1/\fcnerr}(0^{n+6})}\Bigabs{\ah(g(x))-\h(g(x))}<1
\]
    for sufficiently small $\neterr$.

\begin{lemma}
    \label{lem:pointwise-convergence}
Consider the random variable $x \defas \phi \conc \PI$.
Conditioned on the event $\A$, the function $\aINVNET(x)$ converges pointwise to $\INV(x)$, as $\neterr \to 0$.
\end{lemma}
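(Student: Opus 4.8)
The plan is to exploit the factored forms $\aINVNET = \g^{-1} \circ \ah^{-1} \circ I_P^{2n+6,\sigma} \circ \ah \circ \g$ and $\INV = \g^{-1} \circ \h^{-1} \circ I_P^{2n+6,\sigma} \circ \h \circ \g$, which differ only in that the pair $\ah, \ah^{-1}$ replaces $\h, \h^{-1}$. Since $\g$, $\g^{-1}$, and $I_P^{2n+6,\sigma}$ are fixed maps not depending on $\neterr$ (the last being a norm-preserving linear map), the whole problem reduces to propagating the two uniform limits supplied by Lemma~\ref{lem:hhat-convergence} through the composition at the single fixed input $x = \phi \conc \PI$. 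First I would fix a realization of $x$ satisfying the event $\A$; in particular $\bigabs{\PI} < \tfrac{1}{\fcnerr}$, so $x \in \Omega \times B_{1/\fcnerr}(0^{n+6})$, and $\g(x)$ is a fixed point independent of $\neterr$.

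Next I would track where this point lands under each layer, checking it stays inside the compact sets on which Lemma~\ref{lem:hhat-convergence} gives uniform convergence. Applying $\g$ multiplies the last $n+6$ coordinates by $\fcnerr$ and adds a fixed shift whose restriction to those coordinates has norm $\sqrt 2$; since $\fcnerr\bigabs{\PI} < 1$, the point $z \defas \g(x)$ lies in $\Omega \times B_3(0^{n+6})$. By the first half of Lemma~\ref{lem:hhat-convergence}, $\ah(z) \to \h(z)$ as $\neterr \to 0$, and because $I_P^{2n+6,\sigma}$ is continuous it follows that $y_\neterr \defas I_P^{2n+6,\sigma}(\ah(z))$ converges to $y \defas I_P^{2n+6,\sigma}(\h(z))$.

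The main obstacle is the next layer, where the limit must pass through $\ah^{-1}$ even though both the map and its argument are moving. I would split
\[
\bigabs{\ah^{-1}(y_\neterr) - \h^{-1}(y)} \leq \bigabs{\ah^{-1}(y_\neterr) - \h^{-1}(y_\neterr)} + \bigabs{\h^{-1}(y_\neterr) - \h^{-1}(y)},
\]
where the second term vanishes since $\h^{-1}$ is continuous and $y_\neterr \to y$. The first term requires $y_\neterr \in \Omega^+ \times \Omega^+$, the region where the second half of Lemma~\ref{lem:hhat-convergence} gives uniform convergence of $\ah^{-1}$ to $\h^{-1}$. This is the crux: a norm-bookkeeping argument using $\bigabs{\G(x)} \leq \bigabs{x}$ and $\bigabs{\F(x)} \leq 2r$ (from the proof of Lemma~\ref{lem:hhat-convergence}), together with the fact that $\sigma$ transposes only indices lying within the same half, shows that each half of $y$ has norm at most $3r+6$; then the bound $(\ddagger)$ gives $\bigabs{y_\neterr - y} < 1$ for small $\neterr$, placing each half of $y_\neterr$ within norm $3r+7 < 7r+10$, hence inside $\Omega^+ \times \Omega^+$. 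Consequently the first term is bounded by $\max_{\Omega^+ \times \Omega^+}\bigabs{\ah^{-1} - \h^{-1}}$, which tends to $0$.

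Finally, applying the fixed continuous map $\g^{-1}$ preserves the limit, yielding $\aINVNET(x) = \g^{-1}(\ah^{-1}(y_\neterr)) \to \g^{-1}(\h^{-1}(y)) = \INV(x)$ pointwise, as claimed. The essential difficulty throughout is the domain bookkeeping guaranteeing that every intermediate point remains in the compact region where uniform convergence is available; the event $\A$, and especially the bound $\bigabs{\PI} < \tfrac{1}{\fcnerr}$, is exactly what makes this possible.
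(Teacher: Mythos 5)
Your proposal is correct and follows essentially the same route as the paper's proof: the same domain bookkeeping showing that $\g(x)$ lands in $\Omega \times B_3(0^{n+6})$ and that $I_P^{2n+6,\sigma}(\ah(\g(x)))$ lands in $\Omega^+ \times \Omega^+$, so that the uniform-convergence statements of Lemma~\ref{lem:hhat-convergence} apply at every stage of the composition. Your explicit triangle-inequality split at the $\ah^{-1}$ layer (uniform convergence of $\ah^{-1}$ to $\h^{-1}$ on $\Omega^+\times\Omega^+$ plus continuity of $\h^{-1}$ at the moving argument $y_{\neterr}$) is in fact a more careful justification of the step the paper compresses into ``each function converges pointwise and all such functions are continuous.''
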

\begin{proof}
Condition on $\A$ and assume $\neterr$ is sufficiently small that
\eqref{eq:first-fcnerr-assumption} holds.
    Then notice that $x \in \Omega \times B_{1/\fcnerr}(0^{n+6})$, so that $g(x) \in \Omega \times B_3(0^{n+6})$, and hence $\bigl|I_P^{2n+6,\sigma}(\ah(g(x)))\bigr|=\bigl|\ah(g(x))\bigr|<|\h(g(x))|+1 < 3|g(x)|+4r+1 < 7r+10$. Thus $I_P^{2n+6,\sigma}\bigl(\ah(g(x))\bigr) \in \Omega^+ \times \Omega^+$. Hence when evaluating $\aINVNET(x)$, the inputs to both $\ah$ and $\ah^{-1}$ are in the domains where their respective convergence properties stated in Lemma~\ref{lem:hhat-convergence} hold.
Therefore each function occurring in the definition of $\aINVNET$ convergence pointwise to the corresponding function in the definition of $\INV$. Further, all such functions are continuous, and so the result holds.
\end{proof}

For $\phi \in \Omega$, define the random variable $\aINVNETr[\phi] \defas \aINVNET(\phi \conc \pi)_{1..n}$.
Let $\xi_m \sim N(0^n,\fcnerr \Id_n)$ be an independently chosen Gaussian.

The following result is immediate from Lemma~\ref{lem:pointwise-convergence}.
\begin{lemma}
    \label{lem:rv-pointwise-convergence}
Conditioned on the event $\A$, the random variable $\aINVNETr[\phi]$
    converges in distribution to $\INV[\phi]$ as $\neterr \to 0$.
    \qed
\end{lemma}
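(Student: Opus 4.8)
The plan is to deduce convergence in distribution directly from the pointwise convergence already established in Lemma~\ref{lem:pointwise-convergence}, using the elementary fact that sure (hence almost sure) convergence implies convergence in distribution. First I would recall that Lemma~\ref{lem:pointwise-convergence} asserts that, conditioned on $\A$, the function $\aINVNET(x)$ converges pointwise to $\INV(x)$ as $\neterr \to 0$, where $x \defas \phi \conc \PI$. Composing with the projection onto the first $n$ coordinates --- a fixed continuous (indeed linear) map $\Reals^{2n+6} \to \Reals^{n}$ --- preserves pointwise convergence, so $\aINVNET(\phi \conc \PI)_{1..n}$ converges to $\INV(\phi \conc \PI)_{1..n}$; that is, $\aINVNETr[\phi] \to \INVr[\phi]$ as $\neterr \to 0$, for every value of $\PI$ for which $\A$ holds.

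Next I would reinterpret this pointwise statement as a statement about the random variables themselves. Since the convergence holds for every outcome in the conditioning event $\A$, the family $\aINVNETr[\phi]$ converges to $\INVr[\phi]$ surely, and hence almost surely, under the conditional law $P(\,\cdot \mid \A)$. For any sequence $\neterr_k \to 0$, the standard implication that almost sure convergence entails convergence in distribution then yields that $\aINVNETr[\phi]$ converges in distribution to $\INVr[\phi]$ conditioned on $\A$, which is precisely the claim.

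I do not expect a genuine obstacle here, since the lemma is essentially a repackaging of Lemma~\ref{lem:pointwise-convergence}; this is why it can be stated as immediate. The only points meriting a moment of care are (i) confirming that the convergence in Lemma~\ref{lem:pointwise-convergence} holds on all of $\A$ rather than merely almost surely, so that the induced convergence of random variables is genuinely sure (a fortiori almost sure) under the conditional measure, and (ii) reading ``$\neterr \to 0$'' as a continuous-parameter limit, which is handled by applying the almost-sure-implies-in-distribution argument along each sequence $\neterr_k \to 0$. Both are routine once the projection step is in place.
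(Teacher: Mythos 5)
Your argument is correct and matches the paper's intent: the paper declares Lemma~\ref{lem:rv-pointwise-convergence} immediate from Lemma~\ref{lem:pointwise-convergence}, and the route you spell out---projecting to the first $n$ coordinates, noting the convergence holds surely on $\A$, and invoking that almost sure convergence under the conditional law implies convergence in distribution along any sequence $\neterr_k \to 0$---is exactly the omitted reasoning. No gap.
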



We now prove Theorem~\ref{maintheorem}.

\begin{proof}[Proof of Theorem~\ref{maintheorem}]
    Fix $m\in\Nats$; we will define $\hcI_{m}$ such that the sequence $\bigset{\hcI_m}_{m\in \Nats}$ has the desired convergence property using Lemmas~\ref{lem:rv-convergence-in-dist} and~\ref{lem:rv-pointwise-convergence}.

We may decompose the CDF of $\aINVNETr[\phi]$ in terms of $\A$ and $\barA$:
\begin{align*}
    F_{\aINVNETr[\phi]}(\phi') =\,& P(\A)  \cdot F_{\aINVNETr[\phi] | \A}(\phi') \: \\
&+ (1-P(\A)) \cdot F_{\aINVNETr[\phi] | \barA}(\phi').
\end{align*}

Now we form a sequence $\bigset{{{\hcI_{m}}}[\phi]}_{m\in\Nats}$ and demonstrate that $\lim_{m \to \infty} F_{{{\hcI_{m}}}[\phi]}(\phi') = F_{T[\phi]}(\phi')$ at all points of continuity $\phi'$ of $T[\phi]$.

By Lemma~\ref{lem:rv-convergence-in-dist}
choose $\fcnerr$ such that
    \[
        |F_{T[\phi]}(\phi') - (F_{\INV[\phi]|\A})(\phi')|<\tfrac{1}{3m}
    \]
    holds for all points of continuity $\phi'$ of $T[\phi]$ and
    $P(\A) < \frac{1}{3m}$ holds, which is possible by \eqref{eq:neterr-assumption}.

By Lemma~\ref{lem:rv-pointwise-convergence}
choose $\neterr$ so that
\[
    \bigl|(F_{\INV[\phi]|\A})(\phi') - F_{\aINVNETr[\phi] | \A}(\phi')\bigr|  <  \tfrac{1}{3m}
\]
    holds for all points of continuity $\phi'$ of $T[\phi]$
and \eqref{eq:first-fcnerr-assumption} holds.
    This is possible because conditioned on $\A$, the random variable $\aINVNETr[\phi]$ converges in distribution to $\INV[\phi]$ and every point of continuity of $T[\phi]$ is also a point of continuity of $\INV[\phi]$.

Now define $\hcI_{m} \defas \aINVNETr$. Observe that for any point of continuity $\phi'$ of $T[\phi]$, we have
\begin{align*}
    \bigabs{F_{{{\hcI_{m}}}[\phi]}&(\phi') - F_{T[\phi]}(\phi')} \\
    &= \bigabs{F_{\aINVNETr[\phi]}(\phi') - F_{T[\phi]}(\phi')}\\
    &< \bigabs{F_{\aINVNETr[\phi] | \A}(\phi') - F_{T[\phi]}(\phi')} + \tfrac{1}{3m}\\
    &< \bigabs{F_{\INV[\phi]|\A}(\phi') - F_{T[\phi]}(\phi')} + \tfrac{2}{3m}\\
    &< \tfrac{1}{m}.
\end{align*}
Finally, considering this fact for all $m\in\Nats$, we see that ${{\hcI_{m}}}[\phi]$ converges in distribution to $T[\phi]$.
\end{proof}

Observe that universality holds even for the special case of volume-preserving involutive networks: the architecture $\aINVNET$ used in the constructive proof, defined by
    \[
        \aINVNET \defas \g^{-1} \circ \ah^{-1} \circ I_P^{2n+6,\sigma} \circ \ah \circ \g,
    \]
         has a Jacobian whose determinant has magnitude 1, since $\ah$ has the structure of an additive coupling layer and the constant Jacobians of $\g^{-1}$ and $\g$ cancel.

Having established the universality of deep volume-preserving involutive generative models, we now review the following known result showing that volume-preserving involutive functions can be used as valid proposals within an MCMC algorithm.


\subsection{Volume-preserving involutive functions lead to Metropolis--Hastings proposals satisfying detailed balance}
\label{sec:validity-proof}

The proof of detailed balance for Hybrid Monte Carlo relies on the fact that Hamiltonian dynamics composed with negating momentum is involutive and volume-preserving. It is also known that volume-preserving involutive functions lead to Metropolis--Hastings proposals satisfying detailed balance (see, e.g., \citep{marco-proof}), but we were unable to find a published argument. For completeness we describe here a procedure for using any volume-preserving involutive function as a proposal, and prove its correctness.

Our goal is to use a volume-preserving involutive function $f_\theta$ to construct a Markov process with $P_S(\phi)$ as a stationary distribution. To do this, we will find a transition such that the transition probabilities $P_M(\phi \mapsto \phi')$ satisfy the detailed balance condition:
\begin{align*}
P_S(\phi) P_M(\phi \mapsto \phi') = P_S(\phi') P_M(\phi' \mapsto \phi)
\end{align*}	

We follow the original derivation of Hybrid Monte Carlo \citep{duane1987hybrid}, since the structure of the proof is similar.

In order to make a transition, we do the following.
\begin{enumerate}
\item Introduce an auxiliary random variable $\pi$ with probability density $P_G$.
\item Propose a transition drawn from $P_H((\phi,\pi) \mapsto (\phi',\pi'))$ according to the volume-preserving involutive function $f_\theta$:
%
\begin{align*}
P_H((\phi,\pi) \mapsto (\phi',\pi')) = \delta[(\phi',\pi') - f_\theta((\phi,\pi))].
\end{align*}
\item Accept or reject that transition according to the Metropolis--Hastings acceptance criterion
\begin{align*}
P_A&((\phi,\pi) \mapsto (\phi',\pi')) \\
&= \min\Bigl(1,\frac{P_S(\phi')P_G(\pi')P_H((\phi',\pi') \mapsto (\phi,\pi))}{P_S(\phi)P_G(\pi)P_H((\phi,\pi) \mapsto (\phi',\pi'))}\Bigr).
\end{align*}
\item Marginalize over the auxiliary variable $\pi$.
\end{enumerate}

Formally, we define our transition probability by
\begin{align*}
P_M&(\phi \mapsto \phi') \\
&= \int \Bigl( 
    P_G(\pi)\, P_H((\phi,\pi) \mapsto (\phi',\pi')) \\
& \qquad \qquad P_A((\phi,\pi) \mapsto (\phi',\pi')) \Bigr)
    d\pi d\pi'.
\end{align*}

We now show that this transition satisfies the detailed balance condition.


\begin{lemma}
\label{lem:involutive-unchanged}
Applying $f_\theta$ within a Dirac $\delta$ distribution leaves the $\delta$ unchanged:
\begin{align*}
\delta[x-y] \,=\, \delta[f_\theta(x)-f_\theta(y)]
\end{align*}
\end{lemma}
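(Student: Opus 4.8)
The plan is to treat the claimed identity in the sense of distributions and reduce it to the standard change-of-variables formula for the Dirac $\delta$. Fix $y$ and let $\varphi$ be an arbitrary smooth, compactly supported test function. Since two distributions agree precisely when they agree after integration against every such $\varphi$, it suffices to show that $\int \varphi(x)\, \delta[f_\theta(x)-f_\theta(y)]\, dx$ equals $\int \varphi(x)\, \delta[x-y]\, dx = \varphi(y)$, where the integrals range over the domain $\Reals^{2n+6}$ of $f_\theta$.

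First I would perform the substitution $u \defas f_\theta(x)$. Because $f_\theta$ is involutive it is a bijection with inverse $f_\theta^{-1} = f_\theta$, so this substitution is valid and its inverse is simply $x = f_\theta(u)$. The change of variables introduces a Jacobian factor $\bigl|\det J_{f_\theta}(x)\bigr|^{-1}$ evaluated at $x = f_\theta(u)$; but $f_\theta$ is volume-preserving, so this determinant has magnitude $1$ everywhere, giving $dx = du$ and eliminating the factor. This is precisely the step where both hypotheses on $f_\theta$ are consumed — involutivity to invert the substitution and volume preservation to discard the Jacobian — and it is the crux of the argument.

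After the substitution the left-hand integral becomes $\int \varphi(f_\theta(u))\, \delta[u - f_\theta(y)]\, du$. Applying the sifting property of the Dirac $\delta$ collapses this to $\varphi(f_\theta(f_\theta(y)))$, and one final appeal to the involution identity $f_\theta \circ f_\theta = \Id$ simplifies $f_\theta(f_\theta(y))$ to $y$, yielding $\varphi(y)$ and matching the right-hand side. I expect the main obstacle to be rigor rather than difficulty: the Dirac $\delta$ is not a genuine function, so the manipulations above must be justified within the theory of distributions. An equivalent and fully rigorous route is to invoke the standard formula $\delta[g(x)] = \sum_i \delta[x - x_i]/\bigl|\det J_g(x_i)\bigr|$ summed over the roots $x_i$ of $g(x) \defas f_\theta(x) - f_\theta(y)$; injectivity of $f_\theta$ guarantees the single root $x = y$, and volume preservation forces the accompanying Jacobian factor to be $1$, so this route recovers the same conclusion and the two derivations are interchangeable.
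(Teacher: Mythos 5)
Your proof is correct and follows essentially the same route as the paper's: integrate against a test function, substitute $u = f_\theta(x)$, use volume preservation to discard the Jacobian factor, and apply the sifting property together with involutivity to recover $\varphi(y)$. The only cosmetic difference is that you write $x = f_\theta(u)$ directly (exploiting $f_\theta^{-1} = f_\theta$) where the paper keeps $f_\theta^{-1}(u)$, and you additionally note the equivalent derivation via the composition formula for $\delta[g(x)]$, which the paper does not mention.
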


\begin{proof}
For arbitrary $F$ we have
    \[
        F(y) = \int_{\Omega} \: \delta[x-y]F(x) \, dx
    \]
and
\begin{align*}
    \int_{\Omega}\: \delta[f_\theta(x)-&f_\theta(y)]F(x) \,dx \\
=\, &\int_{f_\theta(\Omega)} \: \delta[u-f_\theta(y)]F(f_\theta^{-1}(u)) \, \frac{du}{|\det f_\theta'(x)|}\\
=\, &\int_{f_\theta(\Omega)}\: \delta[u-f_\theta(y)]F(f_\theta^{-1}(u)) \,du \\ & \hspace{2cm} \text{(since $f_\theta$ preserves volume)} \\
=\, &F(f_\theta^{-1}(f_\theta(y))) \\
=\, &F(y), 
\end{align*}
and so
\[
        \int_{\Omega} \: \delta[x-y]F(x)\,dx \,=\, \int_{\Omega} \: \delta[f(x)-f(y)]F(x)\, dx. 
\]
Hence $\delta[x-y]	= \delta[f(x)-f(y)]$.
\end{proof}

\begin{lemma}
\label{lem:PH-symmetric}
$P_H$ is symmetric:
\begin{align*}
P_H((\phi,\pi) \mapsto (\phi',\pi')) = P_H((\phi',\pi') \mapsto (\phi,\pi)).
\end{align*}
\end{lemma}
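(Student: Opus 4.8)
The plan is to unfold the definition $P_H((\phi,\pi) \mapsto (\phi',\pi')) = \delta[(\phi',\pi') - f_\theta(\phi,\pi)]$ and then manipulate the argument of the Dirac delta using the two structural facts available: that $f_\theta$ is an involution (so $f_\theta \circ f_\theta = \Id$) and Lemma~\ref{lem:involutive-unchanged}, which lets us apply $f_\theta$ simultaneously to both arguments inside a $\delta$ without changing it. The target is to rewrite this expression as $\delta[(\phi,\pi) - f_\theta(\phi',\pi')]$, which is exactly $P_H((\phi',\pi') \mapsto (\phi,\pi))$.

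First I would invoke Lemma~\ref{lem:involutive-unchanged} with the substitution $x \defas (\phi',\pi')$ and $y \defas f_\theta(\phi,\pi)$, yielding
\[
\delta[(\phi',\pi') - f_\theta(\phi,\pi)] = \delta[f_\theta(\phi',\pi') - f_\theta(f_\theta(\phi,\pi))].
\]
Next I would use the involution property of $f_\theta$ to collapse $f_\theta(f_\theta(\phi,\pi)) = (\phi,\pi)$, so that the right-hand side becomes $\delta[f_\theta(\phi',\pi') - (\phi,\pi)]$. Finally, since the Dirac delta is even in its argument, i.e.\ $\delta[z] = \delta[-z]$, I would negate inside the bracket to obtain $\delta[(\phi,\pi) - f_\theta(\phi',\pi')]$, which matches the definition of $P_H((\phi',\pi') \mapsto (\phi,\pi))$. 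Chaining these equalities gives the desired symmetry.

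There is essentially no serious obstacle here: the argument is a short sequence of substitutions once Lemma~\ref{lem:involutive-unchanged} and involutivity are in hand. The only point requiring mild care is the bookkeeping of which term plays the role of $x$ and which plays $y$ when applying the lemma, together with the use of evenness of $\delta$; both are routine. The conceptual work has already been front-loaded into Lemma~\ref{lem:involutive-unchanged}, whose proof is where volume preservation is actually consumed (through the cancellation of the Jacobian factor), so this lemma reduces to combining that result with the defining involutive identity of $f_\theta$.
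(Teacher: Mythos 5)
Your proposal is correct and follows the paper's proof step for step: apply Lemma~\ref{lem:involutive-unchanged} to both arguments of the $\delta$, collapse $f_\theta \circ f_\theta$ by involutivity, and finish with the evenness of $\delta$. No differences worth noting.
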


\begin{proof}
We have
\begin{align*}
    P_H((\phi,\pi&) \mapsto (\phi',\pi')) \\
&= \delta[(\phi',\pi') - f_\theta((\phi,\pi))] \\
&= \delta[f_\theta((\phi',\pi')) - f_\theta \circ f_\theta((\phi,\pi))] \\ & \hspace{2cm} (\text{by
Lemma~\ref{lem:involutive-unchanged}}) \\
&= \delta[f_\theta((\phi',\pi')) - (\phi,\pi)] \\ & \hspace{2cm} \text{(since $f_\theta$ is involutive)} \\
&= \delta[(\phi,\pi) - f_\theta((\phi',\pi'))] \\ & \hspace{2cm} (\text{since} \: \delta[x] = \delta[-x]) \\
&=P_H((\phi',\pi') \mapsto (\phi,\pi)),
\end{align*}
establishing the lemma.
\end{proof}

\begin{lemma}
$P_A$ satisfies the following simpler form:
\begin{align*}
P_A((\phi&,\pi) \mapsto (\phi',\pi')) = \min\Bigl(1,\frac{P_S(\phi')P_G(\pi')}{P_S(\phi)P_G(\pi)}\Bigr).
\end{align*}
\end{lemma}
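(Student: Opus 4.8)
The plan is to substitute the symmetry of $P_H$ established in Lemma~\ref{lem:PH-symmetric} directly into the definition of the acceptance ratio given in step 3 of the transition procedure. Recall that
\[
P_A((\phi,\pi) \mapsto (\phi',\pi')) = \min\Bigl(1,\frac{P_S(\phi')P_G(\pi')P_H((\phi',\pi') \mapsto (\phi,\pi))}{P_S(\phi)P_G(\pi)P_H((\phi,\pi) \mapsto (\phi',\pi'))}\Bigr).
\]
First I would invoke Lemma~\ref{lem:PH-symmetric}, which gives $P_H((\phi',\pi') \mapsto (\phi,\pi)) = P_H((\phi,\pi) \mapsto (\phi',\pi'))$, so the two occurrences of $P_H$ in the numerator and denominator are identical. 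Cancelling them leaves exactly
\[
\min\Bigl(1,\frac{P_S(\phi')P_G(\pi')}{P_S(\phi)P_G(\pi)}\Bigr),
\]
which is the claimed simpler form.

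The one point requiring care is that $P_H$ is a Dirac $\delta$ distribution rather than an ordinary density, so the cancellation is formally a ratio of generalized functions. I would address this by noting that the acceptance ratio is only ever evaluated at a proposed transition, i.e. at pairs with $(\phi',\pi') = f_\theta(\phi,\pi)$; by involutivity of $f_\theta$ this is equivalent to $(\phi,\pi) = f_\theta(\phi',\pi')$, so both $\delta$ factors are supported on exactly the same set. On that common support Lemma~\ref{lem:PH-symmetric} identifies them as the same $\delta$, and hence their ratio is $1$ wherever the acceptance probability is actually used.

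There is essentially no further obstacle to anticipate: the substantive work has already been carried out in Lemma~\ref{lem:involutive-unchanged} and Lemma~\ref{lem:PH-symmetric}, which together exploit that $f_\theta$ is both involutive and volume-preserving. The remaining argument is purely algebraic cancellation, so I expect this lemma to reduce to one or two lines once the symmetry of $P_H$ is in hand.
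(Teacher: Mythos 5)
Your proof is correct and takes essentially the same approach as the paper, which simply states ``Apply Lemma~\ref{lem:PH-symmetric} to the definition of $P_A$.'' Your additional remark about the cancellation of the Dirac $\delta$ factors being justified on their common support is a point of care the paper glosses over, but it does not change the argument.
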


\begin{proof}
Apply Lemma~\ref{lem:PH-symmetric} to the definition of $P_A$.
\end{proof}

\begin{lemma}
\label{lem:PSPGPA-symmetric}
$P_SP_GP_A$ is symmetric:
\begin{align*}
P_S(\phi)P_G(\pi)P_A&((\phi,\pi) \mapsto (\phi',\pi')) \\
= \: &P_S(\phi')P_G(\pi')P_A((\phi',\pi') \mapsto (\phi,\pi)).
\end{align*}
\end{lemma}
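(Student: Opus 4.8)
The plan is to reduce the statement to the elementary identity that $x\min(1,y/x)=\min(x,y)$ for positive reals $x,y$. First I would invoke the preceding lemma to replace the forward acceptance probability $P_A((\phi,\pi)\mapsto(\phi',\pi'))$ by its simplified form $\min\bigl(1,\tfrac{P_S(\phi')P_G(\pi')}{P_S(\phi)P_G(\pi)}\bigr)$, and likewise replace the reversed acceptance probability $P_A((\phi',\pi')\mapsto(\phi,\pi))$ by $\min\bigl(1,\tfrac{P_S(\phi)P_G(\pi)}{P_S(\phi')P_G(\pi')}\bigr)$. This turns the asserted equality into a purely algebraic one involving only the two density products and the $\min$ function.

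Next, to make the symmetry transparent, I would introduce the abbreviations $a\defas P_S(\phi)P_G(\pi)$ and $b\defas P_S(\phi')P_G(\pi')$. The left-hand side of the claim then becomes $a\min(1,b/a)$ and the right-hand side becomes $b\min(1,a/b)$. Applying the identity $x\min(1,y/x)=\min(x,y)$ to each side (checking the two cases $a\le b$ and $a>b$), both sides are seen to equal $\min(a,b)$, and the lemma follows immediately. This is essentially the standard computation showing that the Metropolis--Hastings acceptance rule produces a symmetric (reversible) flux, specialized here to the symmetric proposal kernel $P_H$.

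The only subtlety---and the closest thing to an obstacle---is ensuring the divisions are well-defined, i.e.\ that the relevant densities are strictly positive. I would note that the acceptance ratio, and hence this identity, is only needed on the support of the stationary and auxiliary densities, where $a,b>0$. On the complement the proposal weight $P_S(\phi)P_G(\pi)$ vanishes, so both sides of the claimed equality are zero and the statement holds trivially there as well. Thus no case analysis beyond the two comparison cases in the $\min$ identity is required, and the proof is a short substitution-and-simplify argument.
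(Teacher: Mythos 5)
Your proposal is correct and follows essentially the same route as the paper: both invoke the simplified form of $P_A$ from the preceding lemma and then use the identity $a\min(1,b/a)=\min(a,b)=b\min(a/b,1)$ with $a=P_S(\phi)P_G(\pi)$ and $b=P_S(\phi')P_G(\pi')$. Your remark on positivity of the densities is a small additional care the paper leaves implicit, but the substance of the argument is the same.
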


\begin{proof}
We have
\begin{align*}
P_S(\phi)P_G(\pi)&P_A((\phi,\pi) \mapsto (\phi',\pi')) \\
&= P_S(\phi)P_G(\pi) \: \min\Bigl(1,\frac{P_S(\phi')P_G(\pi')}{P_S(\phi)P_G(\pi)}\Bigr) \\
&=  \: \min\bigl(P_S(\phi)P_G(\pi),P_S(\phi')P_G(\pi')\bigr) \\
&= P_S(\phi')P_G(\pi') \: \min\Bigl(\frac{P_S(\phi)P_G(\pi)}{P_S(\phi')P_G(\pi')},1\Bigr) \\
&= P_S(\phi')P_G(\pi')P_A((\phi',\pi') \mapsto (\phi,\pi)), 
\end{align*}
as desired.
\end{proof}

\begin{theorem}
The Markov chain defined by transitions $P_M$ has $P_S$ as a stationary distribution.
\end{theorem}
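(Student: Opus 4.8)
The plan is to prove the stronger statement that $P_M$ satisfies detailed balance, and then derive stationarity as an immediate corollary. The standard fact I am invoking is that if a transition kernel satisfies detailed balance with respect to $P_S$, then $P_S$ is stationary; so the real work is establishing detailed balance, which the preceding lemmas have set up almost completely.

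First I would write $P_S(\phi)P_M(\phi \mapsto \phi')$ by pulling $P_S(\phi)$ inside the defining integral for $P_M$, obtaining an integrand of the form $\bigl[P_S(\phi)P_G(\pi)P_A((\phi,\pi)\mapsto(\phi',\pi'))\bigr]\,P_H((\phi,\pi)\mapsto(\phi',\pi'))$. The key step is then to rewrite this integrand using Lemma~\ref{lem:PSPGPA-symmetric} on the bracketed factor and Lemma~\ref{lem:PH-symmetric} on $P_H$, both of which assert invariance under the swap $(\phi,\pi)\leftrightarrow(\phi',\pi')$. After applying both symmetries, the integrand becomes $\bigl[P_S(\phi')P_G(\pi')P_A((\phi',\pi')\mapsto(\phi,\pi))\bigr]\,P_H((\phi',\pi')\mapsto(\phi,\pi))$. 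Since the integration runs over all of $\pi$ and $\pi'$, I may factor $P_S(\phi')$ out of the integral and recognize what remains as exactly the integral defining $P_M(\phi'\mapsto\phi)$, yielding the detailed balance identity $P_S(\phi)P_M(\phi \mapsto \phi') = P_S(\phi')P_M(\phi' \mapsto \phi)$.

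To conclude stationarity I would integrate the detailed balance identity over $\phi$:
\[
\int P_S(\phi)\,P_M(\phi \mapsto \phi')\, d\phi = \int P_S(\phi')\,P_M(\phi' \mapsto \phi)\, d\phi = P_S(\phi')\int P_M(\phi' \mapsto \phi)\, d\phi,
\]
so that the left side equals $P_S(\phi')$ provided the full transition kernel normalizes to $1$ in its second argument. This normalization is where the only genuine subtlety lies: the kernel $P_M$ as written captures only the accepted moves, and Metropolis--Hastings rejection contributes an additional self-transition mass at $\phi'$ whose weight is precisely $1-\int P_M(\phi'\mapsto\phi)\,d\phi$. I would therefore note explicitly that the rejection part trivially satisfies detailed balance (it never changes the state), so appending it preserves the detailed balance identity while making the total kernel a proper probability kernel; with that observation, $\int P_M(\phi'\mapsto\phi)\,d\phi$ together with the rejection mass integrates to $1$ and the chain of equalities above gives $\int P_S(\phi)\,K(\phi\mapsto\phi')\,d\phi = P_S(\phi')$ for the full kernel $K$.

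I expect the detailed balance computation itself to be routine given Lemmas~\ref{lem:PSPGPA-symmetric} and~\ref{lem:PH-symmetric}, since those lemmas were engineered to make the integrand manifestly symmetric. The main obstacle, and the step I would take care to state cleanly, is the bookkeeping around the rejection term: making explicit that $P_M$ is only the move component and that stationarity of $P_S$ holds for the complete kernel once the self-transition mass is accounted for.
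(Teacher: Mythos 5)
Your proof is correct and follows essentially the same route as the paper: detailed balance is obtained by applying Lemmas~\ref{lem:PSPGPA-symmetric} and~\ref{lem:PH-symmetric} to the integrand in the definition of $P_M$, and stationarity follows. Your explicit bookkeeping of the rejection self-transition mass when passing from detailed balance to stationarity of the full kernel is a point the paper leaves implicit (it simply asserts that detailed balance suffices), and is a sound addition rather than a departure.
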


\begin{proof}
It suffices to show that $P_M$ satisfies the detailed balance condition
\begin{align}
P_S(\phi) P_M(\phi \mapsto \phi') = P_S(\phi') P_M(\phi' \mapsto \phi).
\label{eq:det-bal}
\tag{$\star$}
\end{align}	
We have
\begin{align*}
P_S&(\phi) P_M(\phi \mapsto \phi') \\
    &=\int \Bigl( 
    P_S(\phi)P_G(\pi) P_H((\phi,\pi) \mapsto (\phi',\pi')) \\
    & \qquad \qquad P_A((\phi,\pi) \mapsto (\phi',\pi')) \Bigr) 
    d\pi d\pi' \\
    &=\int \Bigl( 
    P_S(\phi')P_G(\pi') P_H((\phi,\pi) \mapsto (\phi',\pi')) \\
    & \qquad \qquad P_A((\phi',\pi') \mapsto (\phi,\pi)) \Bigr)
    d\pi d\pi' \\
& \hspace{3cm} \text{(by Lemma~\ref{lem:PSPGPA-symmetric})} \\
    &=\int \Bigl( 
    P_S(\phi')P_G(\pi') P_H((\phi',\pi') \mapsto (\phi,\pi)) \\
    & \qquad \qquad P_A((\phi',\pi') \mapsto (\phi,\pi)) \Bigr)
    d\pi d\pi' \\
& \hspace{3cm} \text{(by Lemma~\ref{lem:PH-symmetric})} \\
&=P_S(\phi') P_M(\phi' \mapsto \phi),
\end{align*}
and so \eqref{eq:det-bal} holds.
\end{proof}

We have established that for Markov processes generated by our transition, the desired distribution is stationary, by showing that it satisfies detailed balance.
This implies, when the chain is ergodic, that the MCMC procedure eventually generates samples which are arbitrarily close in total variation distance to the posterior distribution.

\section{Training and sampling algorithm}
\label{sec:training}

Having established the generality of our involutive MCMC procedure, we now describe a method for training optimized involutive transition kernels.

As discussed in \citep{song2017nice}, a useful transition kernel satisfies three criteria: 1) low bias in the limit; 2) fast convergence; and 3) low autocorrelation.

Volume-preserving involutive functions lead to transition kernels with zero bias in the limit (as we saw in Section~\ref{sec:validity-proof}, assuming ergodicity), so criterion 1 is satisfied.

Previous work has shown that using a ``Markov-GAN" or MGAN objective \citep{song2017nice} can satisfy criterion 2, by finding a good tradeoff between proposals being near the posterior and a high proposal acceptance rate. This objective is
\begin{align*}
\stackrel[\theta]{}{\mathrm{min \:}}
\stackrel[D]{}{\mathrm{max}}\Big\{
\bE_{x \sim p_d}[D(x)]
    -\lambda \bE_{\bar{x} \sim \chi_\theta^b}[D(\bar{x})]\hspace*{-100pt}\\
    &-(1-\lambda) \bE_{x_d \sim p_d ,\, \bar{x} \sim T^m_\theta(\bar{x} | x_d)}[D(\bar{x})]\Big\},
\end{align*}
where $\theta$ and $D$ are the parameters of the generator and discriminator, $p_d$ is the true posterior, $\chi_\theta^b$ is the distribution of samples from the Markov chain after $b$ transitions from $X$ (the given sampling distribution), $T^m_\theta(\bar{x} | x_d)$ is the distribution of samples from the Markov chain after $m$ transitions starting from a sample from the true posterior, and $\lambda$ is a free parameter.

We train using a new, computationally efficient lower-variance estimator of $\bE_{\bar{x} \sim \chi_\theta^b}[D(\bar{x})]$ which enables accurate training through multiple MCMC transitions, even when far from the posterior. This enables us to train using $\lambda=1$, which is desirable since $\lambda<1$ requires sampling from the posterior, which is not always tractable. In contrast, the training method of A-NICE-MC ignores inverse transitions, an approximation which is justified only if chains converge quickly to the posterior. We could optimize the true objective by estimating this expectation with individual samples from $\chi_\theta^b$, but we instead form a lower variance approximation. We do this by fixing a sampled set of auxiliary variables each training step, and then computing the exact expectation conditioned on the use of those auxiliary variables. This approximation may bias the true MGAN objective; it has been helpful in practice, but more analysis is required to determine when this approximation is justified. Specifically, we optimize
\begin{align*}
\stackrel[\theta]{}{\mathrm{min \:}}
\stackrel[D]{}{\mathrm{max}}\Big\{
\bE_{x \sim p_d}[D(x)]
    -\bE_{\bar{x} \sim \chi_\theta^b}[D(\bar{x})]\Big\}
\end{align*}

\begin{algorithm}[t!]
    \caption{Involutive Neural MCMC Training}
    \begin{algorithmic}[1]
        \State Let $X$ and $Y$ be sampling distributions for states and auxiliary variables respectively.
        \State Let $b$ be the number of steps of MCMC we consider during training.
        \State Let \texttt{training\_steps} be the desired number of training steps.
        \State Let $w$ be the permitted magnitude of weights for WGAN training.
        \State Initialize a neural net $D$ (the discriminator) and an involutive neural net $G$ (the generator).
        \For {\texttt{step} in \texttt{training\_steps}}
        \State Sample a true state $\widehat \phi$ from the posterior $\Phi$.
        \State Sample an initial state $\phi_0 \sim X$.
        \For {$i$ in $\{0, \ldots, b-1\}$}
        \State Sample an auxiliary variable $\pi \sim Y$.
        \State $(\phi_{i+1},\pi') \leftarrow G(\phi_{i},\pi)$
        \State $A_{i \mapsto {i+1}} \leftarrow \min\big(1,\frac{ f_\Phi(\phi_{i+1}) f_Y(\pi')}{f_\Phi(\phi_{i}) f_Y(\pi)}\big)$
        \EndFor
    	\For {$t$ in $\{0, \ldots, b\}$}
    		\State $\chi_{0,t} \leftarrow (1-A_{0 \mapsto 1})^t$
    	\EndFor
     	\For {$j$ in $\{1, \ldots, b-1\}$}
     		\State $\chi_{j,j-1} \leftarrow 0$
    		\For {$t$ in $j, \ldots, b$}
    			\State $\chi_{j,t} \leftarrow \chi_{j-1,t-1}A_{j-1 \mapsto j} $
                \Statex \hspace*{70pt} $+ \chi_{j,t-1}(1-A_{j \mapsto j+1})$
    		\EndFor
    	\EndFor
    	\State $P(i) \defas \chi_{i,b}$
        \If{$\texttt{step}\, \% \,2 == 0$ }
        \State $D_\textrm{loss} \leftarrow D(\widehat \phi) - \bE_{i \sim P}[D(\phi_i)]$
        \State Update $D$ by objective $D_{\textrm{loss}}$ via RMSProp.
        \State Clamp weights of $D$ in range $[-w,w]$.
        \Else
         \State $G_\textrm{loss} \leftarrow \bE_{i \sim P}[D(\phi_i)]$
        \State Update $G$ by objective $G_\textrm{loss}$ via RMSProp.
        \EndIf
        \EndFor \\
        \Return $G$
    \end{algorithmic}
    \label{alg:training}
\end{algorithm}

where $\chi_\theta^b$ is the distribution of states after $b$ transitions of a random Markov chain with states $\phi_i$ and transition probabilities $A_{i \mapsto j}$ defined inductively over $i$ by
\begin{align*}
\pi_i &\sim Y\\
\phi_0 &\sim X\\
\phi_{i+1} \conc \pi_{i}' &= G_\theta(S_{i},\pi_{i})\\
A_{i \mapsto j} &= \delta_{j,i+1}\min\Big(1,\frac{f_X(\phi_j)f_Y(\pi_i')}{f_X(\phi_{i})f_Y(\pi_i)}\Big),
\end{align*}
where $X$ is a sampling distribution of initial states, $Y$ is a sampling distribution for auxiliary variables, $G_\theta$ is an involutive neural network parameterized by $\theta$, $\Phi$ is the true posterior, $\delta$ is the Kronecker delta, and $f_X$ and $f_Y$ are the densities of $X$ and $Y$ respectively.

Previous work \citep{song2017nice} has also shown that training using a pairwise discriminator can reduce autocorrelation. One trains the discriminator to distinguish pairs of samples from the same chain from pairs of samples from the posterior. When it is possible to generate true independent pairs of samples from the posterior, this technique can be used to reduce autocorrelation and satisfy criterion 3.

Algorithms~\ref{alg:training} and \ref{alg:sampling} provide pseudocode describing our Wasserstein-GAN-style training procedure \citep{arjovsky2017wasserstein} and our sampling procedure. In practice, both of these algorithms should be batched over chains, and all probability calculations should be done in log space.

\begin{figure*}[t]
\centering
\setlength{\unitlength}{1mm}
\begin{picture}(160,50)
\put(40,25){\includegraphics[width=20mm]{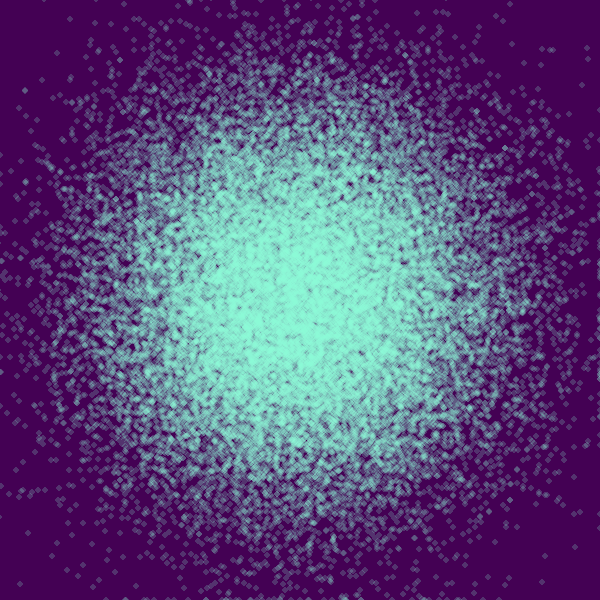}}
\put(60,25){\includegraphics[width=20mm]{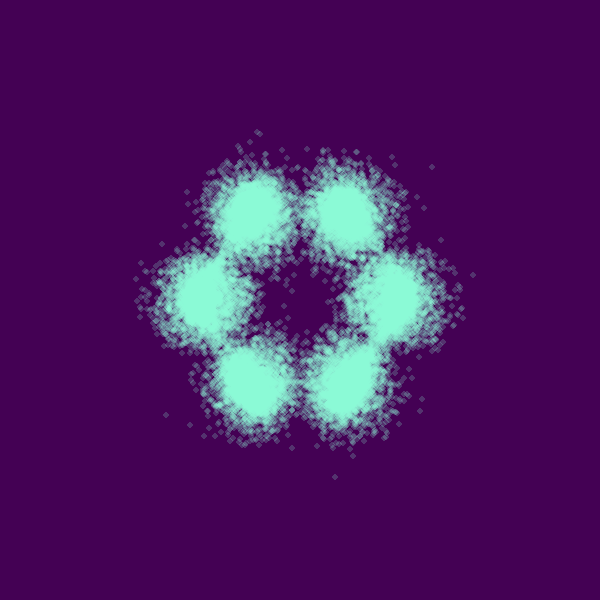}}
\put(80,25){\includegraphics[width=20mm]{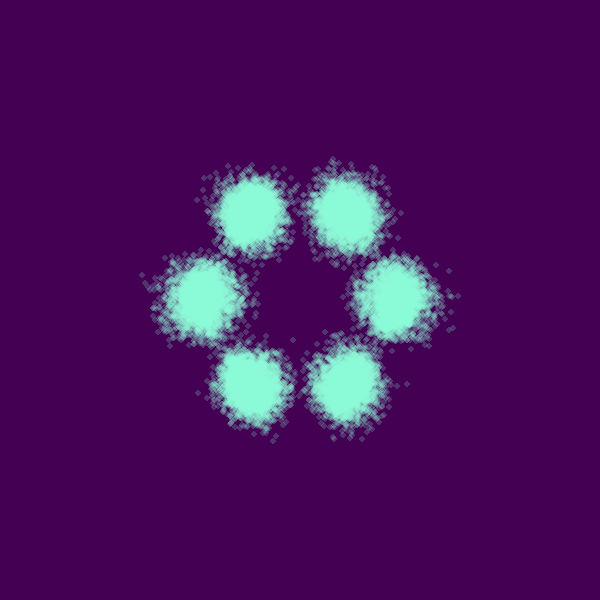}}
\put(100,25){\includegraphics[width=20mm]{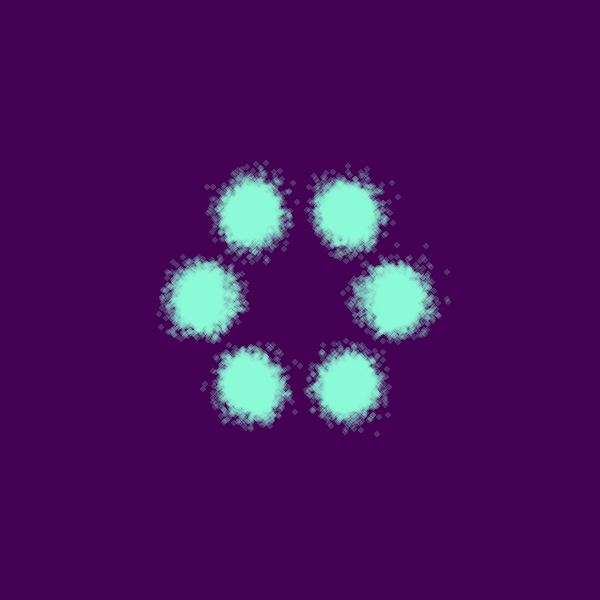}}
\put(40,5){\includegraphics[width=20mm]{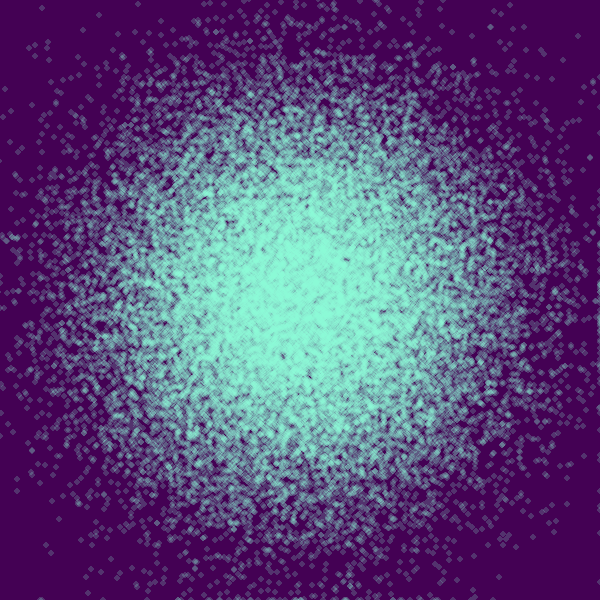}}
\put(60,5){\includegraphics[width=20mm]{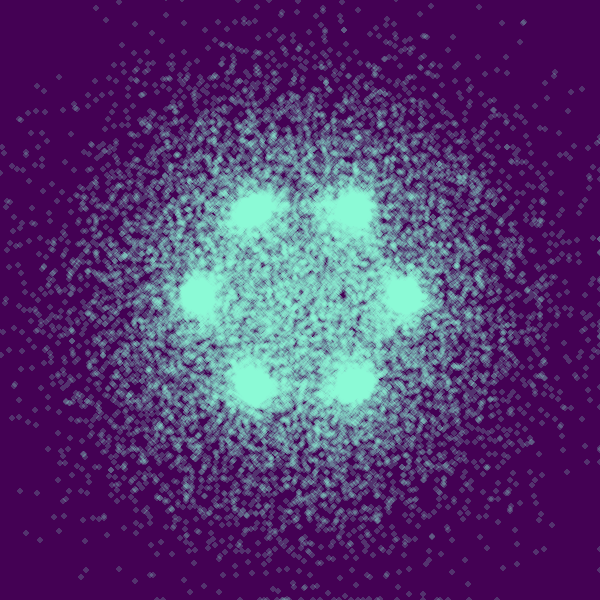}}
\put(80,5){\includegraphics[width=20mm]{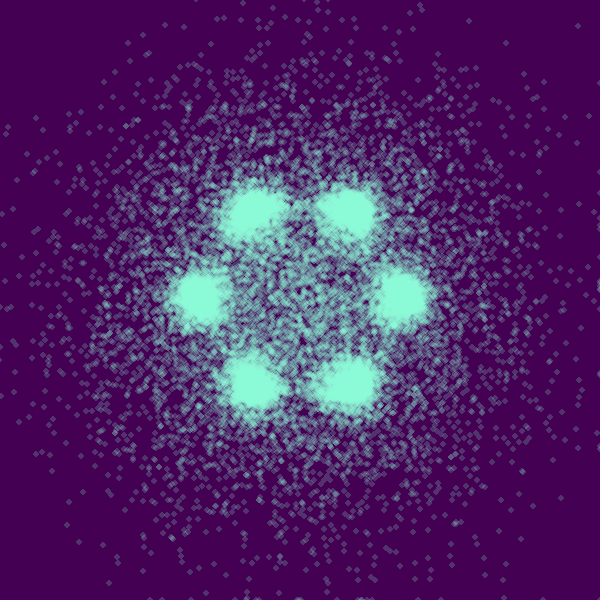}}
\put(100,5){\includegraphics[width=20mm]{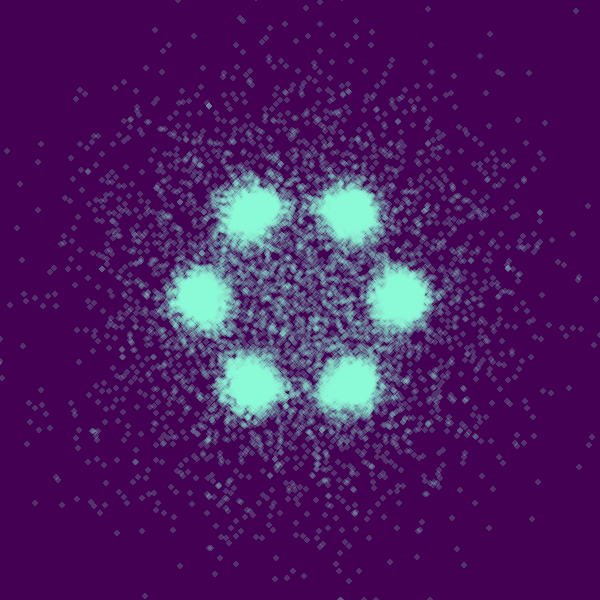}}

\put(130,15){\includegraphics[width=20mm]{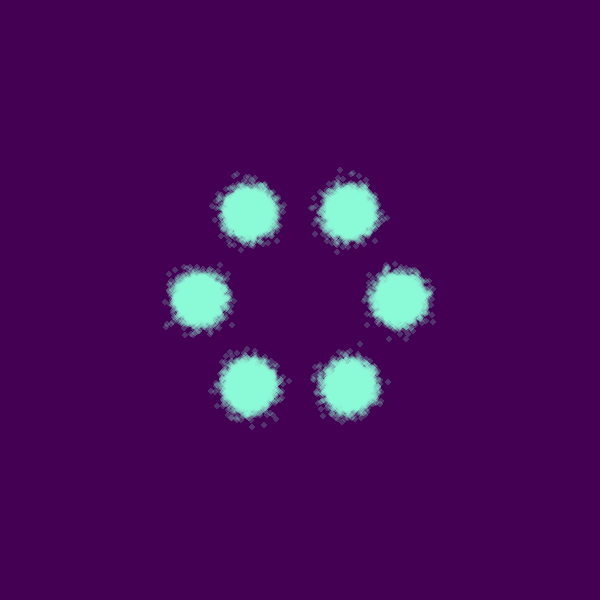}}

\put(130,10){True density}

\put(-2,34){Involutive Neural MCMC:}
\put(18,14){A-NICE-MC:}

\put(44,0){0 steps}
\put(64,0){1 step}
\put(84,0){2 steps}
\put(104,0){3 steps}
\end{picture}
\caption{Density plots of samples from A-NICE-MC and Involutive Neural MCMC. Note the outliers remaining in A-NICE-MC, which are samples for which a forward transition has never been proposed. Almost every sample from Involutive Neural MCMC is near the posterior after only one step.}
\label{fig:gaussian_densities}

\vspace{5mm}

\end{figure*}

\newcommand{\propx}{119}
\newcommand{\propy}{120}
\newcommand{\xaxx}{47}
\newcommand{\xaxy}{-6}
\newcommand{\yaxx}{-6}
\newcommand{\yaxy}{48}
\newcommand{\titley}{118}
\newcommand{\corrx}{119}
\newcommand{\corry}{70}
\newcommand{\corryaxx}{-5}
\newcommand{\corryaxy}{23}
\newcommand{\corrxaxx}{50}
\newcommand{\corrxaxy}{-2}

\begin{figure*}[t]
\centering
\setlength{\unitlength}{1mm}
\begin{picture}(160,70)
\put(40,45){\includegraphics[width=20mm]{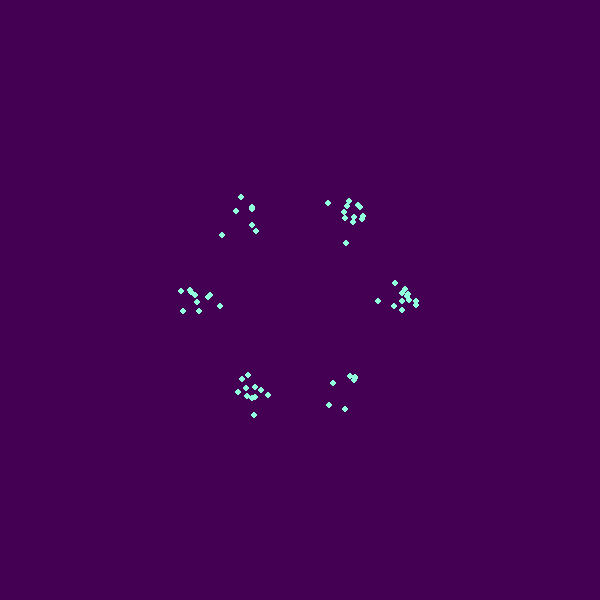}}
\put(60,45){\includegraphics[width=20mm]{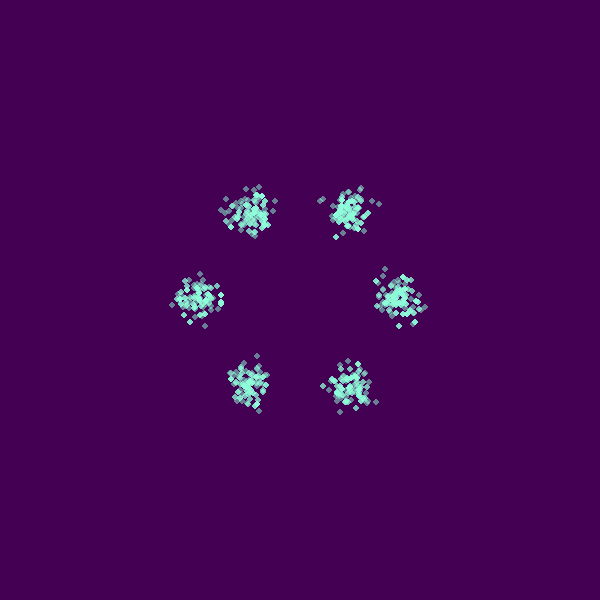}}
\put(80,45){\includegraphics[width=20mm]{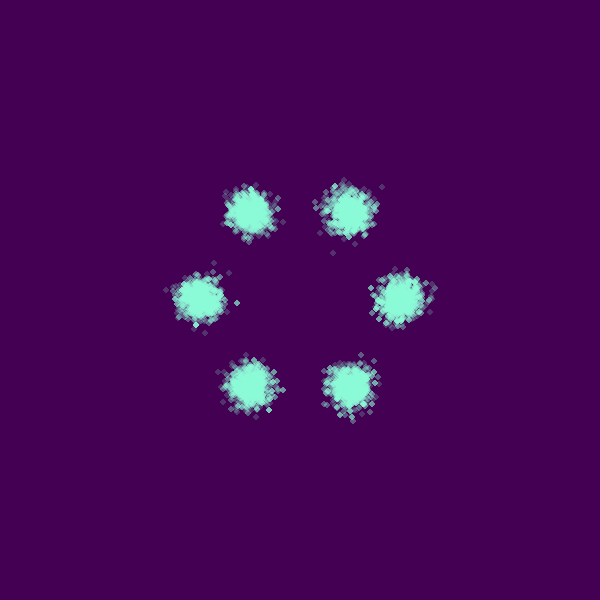}}

\put(40,5){\includegraphics[width=20mm]{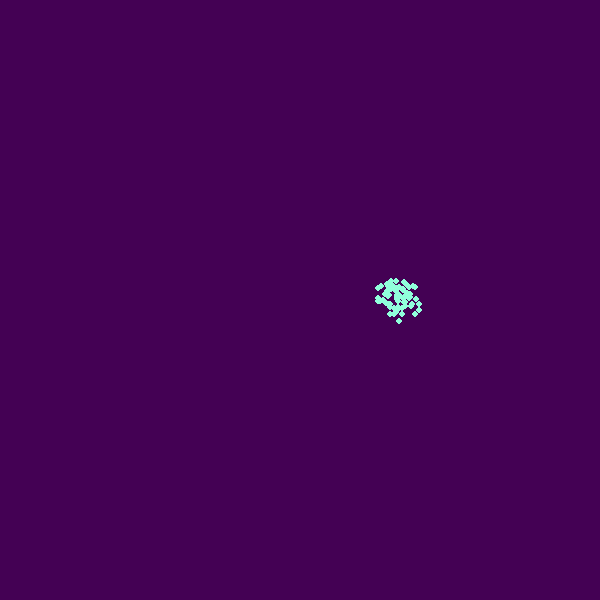}}
\put(60,5){\includegraphics[width=20mm]{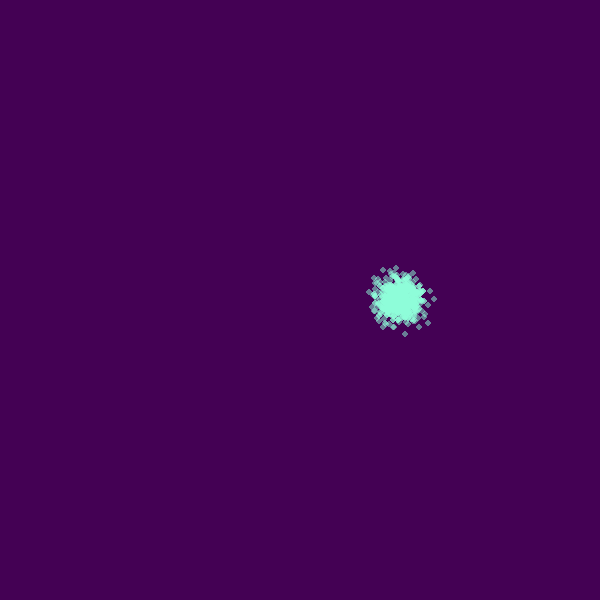}}
\put(80,5){\includegraphics[width=20mm]{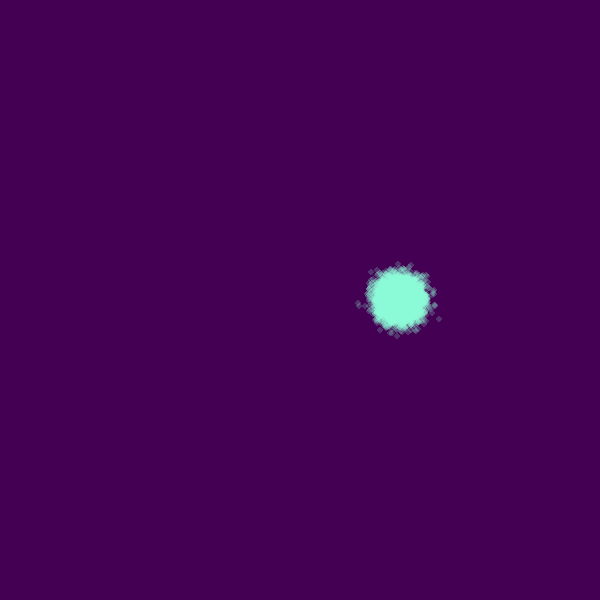}}

\put(40,25){\includegraphics[width=20mm]{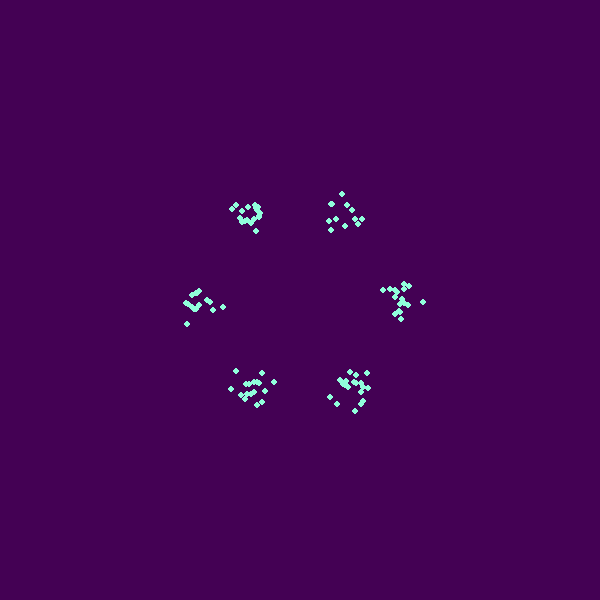}}
\put(60,25){\includegraphics[width=20mm]{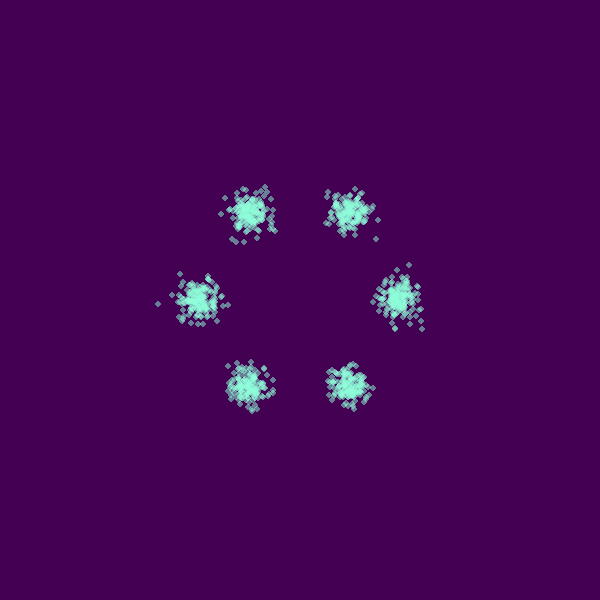}}
\put(80,25){\includegraphics[width=20mm]{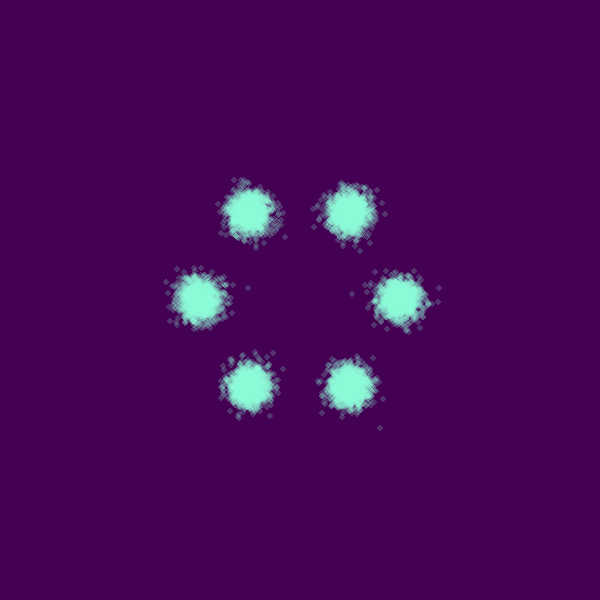}}

\put(15.5,34){True Samples:}
\put(-3,54){Involutive Neural MCMC:}
\put(5,14){Hybrid Monte Carlo:}
\put(23.8,0){Samples:}
\put(48,0){100}
\put(67,0){1000}
\put(86,0){10000}

\put(110,40){
	\begin{picture}(50,40)
	\put(2,28){Involutive Neural MCMC}
	\put(0,0){\includegraphics[width=40mm]{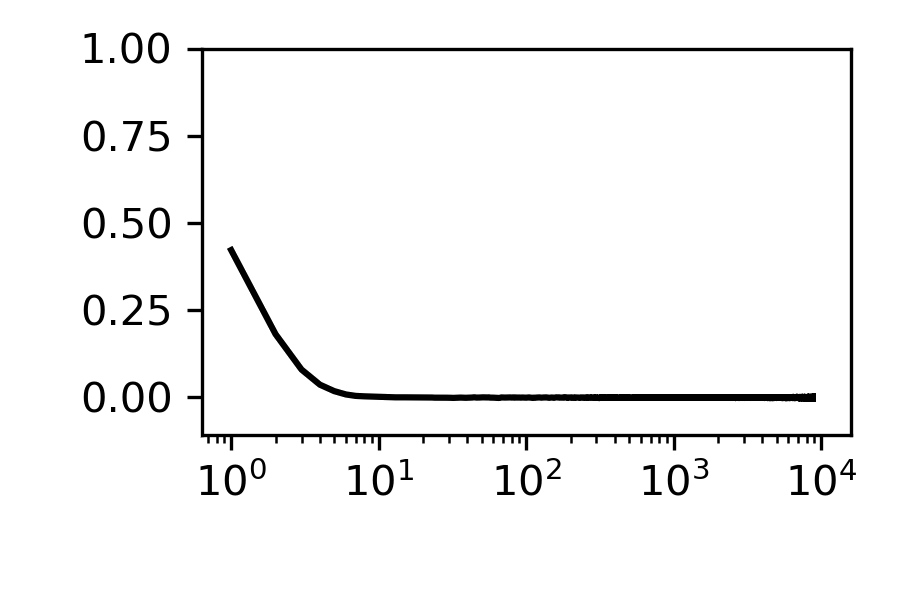}}
	\put(20,0){steps}
	\put(-2,7){$\rotatebox{90}{$x$ autocorr}$}
	\end{picture}
}

\put(110,0){
	\begin{picture}(50,40)
	\put(7.5,28){Hybrid Monte Carlo}
	\put(0,0){\includegraphics[width=40mm]{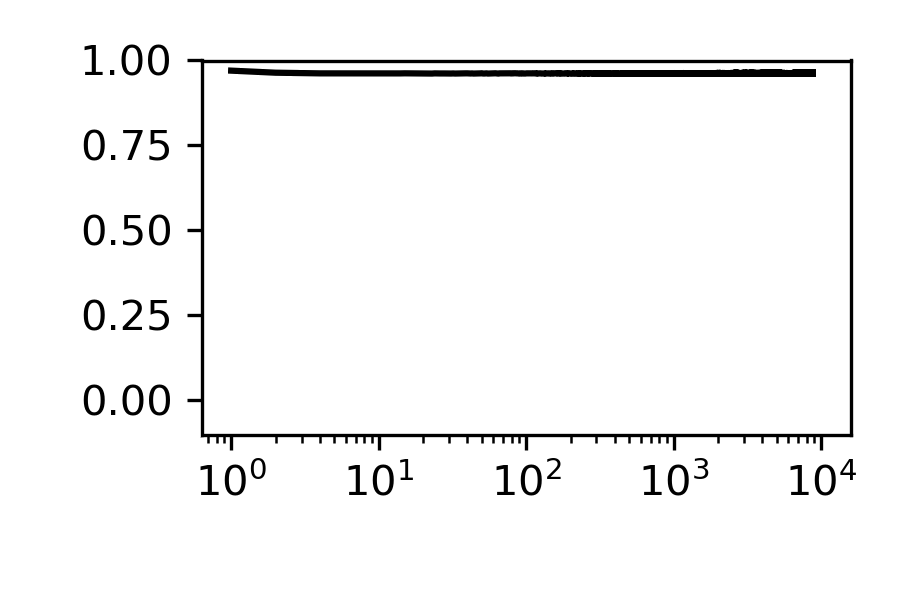}}
	\put(20,0){steps}
	\put(-2,7){$\rotatebox{90}{$x$ autocorr}$}
	\end{picture}
}

\end{picture}
\caption{Density plots of samples from a single long chain of Involutive Neural MCMC and HMC. Note that Involutive Neural MCMC mixes completely within 10 steps, whereas HMC does not mix even after 10000 steps.}
\label{fig:gaussian_chains}

\vspace{2mm}

\end{figure*}

\begin{algorithm}[t!]
    \caption{Involutive Neural MCMC Sampling}
    \begin{algorithmic}[1]
    	\State Let $G$ be the generator network obtained during training.
    	\State Let $X$ and $Y$ be the sampling distributions for states and auxiliary variables respectively used during training.
    	\State Let $b$ be the number of steps of MCMC to use, which may be different than that used in training.
    	\State Sample initial state $\phi \sim X$.
        \For {$i$ in $\{1, \ldots, b\}$}
        \State Sample auxiliary variable $\pi \sim Y$.
        \State $(\phi',\pi') \leftarrow G(\phi,\pi)$
        \State $p \leftarrow \frac{ f_\Phi(\phi) f_Y(\pi')}{f_\Phi(\phi) f_Y(\pi)}$
        \State With probability $p$, update $\phi \leftarrow \phi'$.
        \EndFor \\
        \Return $\phi$

    \end{algorithmic}
    \label{alg:sampling}
\end{algorithm}

If one does not have access to a generative model for true posterior samples, one can instead bootstrap \citep{song2017nice}.

\begin{figure}[t]
\centering
\includegraphics[width=80mm]{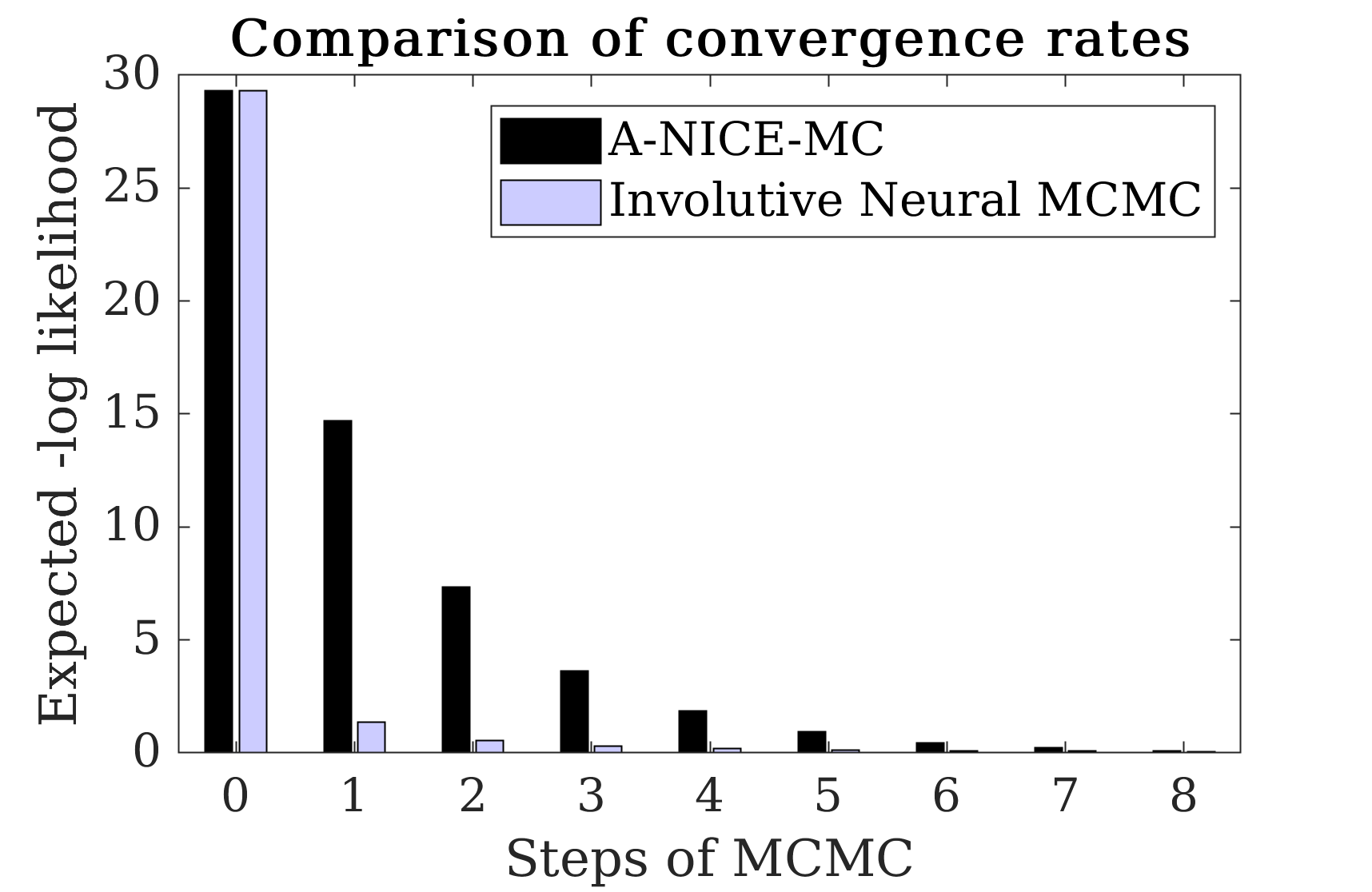}
\caption{Expected negative log likelihood of samples from A-NICE-MC and Involutive Neural MCMC relative to samples from the true posterior.}
\label{fig:gaussian-convergence}
\end{figure}

\section{Experimental Results}
\label{sec:experiments}

We train a deep volume-preserving involutive neural network to serve as a neural proposal for sampling from a mixture of six Gaussians (mog6 from \citep{song2017nice}) and compare its convergence rate to that of A-NICE-MC.

\textbf{Architecture:} In this experiment, we use states $\phi \in \Reals^2$ and auxiliary variables $\pi \sim N(0^{30},\Id_{30})$. The higher dimension of $\pi$ helps increase the width, and thus capacity, of our deep involutive network.

Our discriminator is a neural network consisting of a single hidden layer of width 64 and ReLU activation.

Our generator is an involutive neural network consisting of a symmetric composition $I_F^{n,g} \circ I_P^{n,\sigma} \circ I_F^{n,h} \circ I_P^{n,\sigma} \circ I_F^{n,g}$ where $\sigma$ is a uniform random involutive permutation (chosen once at network initialization), and $g$ and $h$ are invertible neural networks. Each of $g$ and $h$ consists of three composed invertible blocks~\citep{ardizzone2018analyzing}. The first and third invertible blocks each use as their nonlinear functions two densely connected neural networks with a single hidden layer of width 8 times its input dimension and ReLU activation. The second invertible block is a uniformly random permutation.

\textbf{Results:} Accepted transitions from both A-NICE-MC and Involutive Neural MCMC converge very quickly. However, initial proposals from A-NICE-MC have acceptance probabilities of about 50\%, whereas we observe nearly 100\% acceptance for proposals from Involutive Neural MCMC. See Figs.~\ref{fig:gaussian_densities}, \ref{fig:gaussian_chains}, and \ref{fig:gaussian-convergence} for density plots and a comparison of convergence rates.

\section{Discussion}

This paper has introduced new deep learning building blocks for
constructing involutive neural networks, deep involutive generative models, and methods to constrain these to preserve volume; it has also proved that these generative models are universal approximators for probability
kernels, and it has shown how to train and use deep volume-preserving involutive generative
models for fast neural MCMC. This paper has also demonstrated that
Involutive Neural MCMC can converge more rapidly than
A-NICE-MC, a recently introduced neural MCMC technique, and that it is
possible for Involutive Neural MCMC to switch modes more effectively
than Hybrid Monte Carlo.

Much more work is needed to empirically study the performance of
Involutive Neural MCMC on a broader class of problems and involutive
architectures. The relationship between training time, network
capacity, and convergence rate are not yet clear, even on
simple examples. We note that because deep involutive
generative models are self-inverting, it may be feasible to use
recently introduced auxiliary variable techniques to assess the
convergence rate of Involutive Neural MCMC to the true posterior, in
terms of KL divergence \citep{NIPS2017_6893}.

There is a widespread need for techniques that construct fast,
accurate MCMC proposals for broad classes of Bayesian inference
problems. Involutive Neural MCMC offers a way to learn MCMC proposals
using neural networks without requiring the ability to analytically
calculate output probability densities of those networks. A broad
class of GAN-based techniques thus become available to MCMC algorithm
designers. Also, because volume-preserving involutive generative models are universal
approximators, they can in principle learn arbitrarily good proposals
given enough network capacity and training compute
time. We hope the flexibility afforded by Involutive Neural MCMC leads to the development of many fast neural MCMC schemes for challenging inference problems.

\section*{Acknowledgements}

The authors would like to thank Ian Hunter, Sarah Bricault, Marco Cusumano-Towner, and Jayson Lynch for helpful discussions, and Sam Power for identifying an error in an earlier version of this paper. This research was supported by
Fonterra, IBM (via the MIT-IBM Watson AI Lab),
and gifts from the Siegel Family Foundation
and the Aphorism Foundation.

\bibliography{mybib}

\end{document}